\title{Fast nonlinear embeddings via structured matrices}
\titlerunning{Fast nonlinear embeddings via structured matrices} 
\author[1]{Krzysztof Choromanski}
\author[2]{Francois Fagan}
\affil[1]{Google Research, New York, NY 10011, USA, \texttt{kchoro@google.com}}
\affil[2]{Department of IEOR, Columbia University, New York, NY 10027, USA, \texttt{ff2316@columbia.edu}}
\authorrunning{K. Choromanski \& F. Fagan} 
\subjclass{G.3  Probability and statistics - Probabilistic algorithms}
\keywords{dimensionality reduction, structured matrices, nonlinear embeddings}
\begin{document}

\maketitle

\begin{abstract}
We present a new paradigm for speeding up randomized computations of several frequently used functions in machine learning. In particular, our paradigm can be applied for improving
computations of kernels based on random embeddings. Above that, the presented framework covers multivariate randomized functions. As a byproduct, we propose an algorithmic
approach that also leads to a significant reduction of space complexity. Our method is based
on careful recycling of Gaussian vectors into structured matrices that share properties of fully random
matrices. The quality of the proposed structured approach follows from combinatorial properties of the
graphs encoding correlations between rows of these structured matrices. Our framework covers as special cases
already known structured approaches such as the Fast Johnson-Lindenstrauss Transform, but is much more general since
it can be applied also to highly nonlinear embeddings. We provide strong concentration results showing the quality
of the presented paradigm.
 \end{abstract}

\section{Introduction}

Dimensionality reduction techniques and nonlinear embeddings based on random projections is a well-established field of machine
learning. It is built on the surprising observation that the relationship between points in a high-dimensional space might be approximately
reconstructed from a relatively small number of their independent random projections. This relationship might be encoded by the
standard Euclidean distance, as is the case for the Johnson-Lindenstrauss Transform \cite{krahmer2011new}, or a nonlinear function
such as kernel similarity measure \cite{rahimi}. 
These techniques are applied in compression and information retrieval \cite{ailon2006approximate, andoni2006near,dasgupta2011fast,gong2012angular},
compressed sensing due to the related restricted isometry properties \cite{ailon2014fast, bourgain2011explicit,dehghan2015restricted},
quantization \cite{boufounos2015representation,jacques2013quantized} and many more.
One particularly compelling application involves random feature selection techniques that were successfully used for
large-scale kernel computation \cite{huang,rahimi,smola_book,vedaldi,xie}.
The randomized procedure for computing many of the kernels' similarity measures/distances considered in that setting (including Euclidean distance, angular similarity kernels,
arc-cosine kernels and Gaussian kernels) is based on using first a Gaussian random
mapping and then applying pointwise nonlinear mappings (thus the computations mimic these in the neural network setting, but the linear projection is not learned).
This is the area of our interest in this paper.

Recently it was observed that for some of these procedures unstructured random matrices can be replaced by their structured counterparts
and the quality of the embedding does not change much. A structured matrix $\textbf{A} \in \mathbb{R}^{m \times n}$ uses $t < mn$ random gaussian
variables and distributes them in a way that approximately preserves several properties of the completely random version. The importance of the
structured approach lies in the fact that it usually provides speedups of matrix-vector multiplication --- a key computational block of the unstructured
variant --- by the exploitation of the matrix structure (for instance for Gaussian circulant matrices one may use the Fast Fourier Transform to reduce the computational time
from $O(mn)$ to $O(n\log(m))$). Furthermore, it gives a space complexity reduction since structured matrices can be stored in subquadratic or even linear space.
A structured approach to linear embeddings, called the Fast Johnson-Lindenstrauss Transform, is itself a subject of vast volume of research results that use different approaches involving: Hadamard matrices
with Fast Fourier Transforms and sparse matrices \cite{ailon2006approximate, ailon2013almost, ailon2014fast, bingham2001random, krahmer2011new},
binary and sparse matrices \cite{achlioptas2003database,dasgupta2010sparse, kane2010derandomized,kane2014sparser,matouvsek2008variants,nelson2013osnap}, Lean Walsh Transform \cite{liberty2008dense}, circulant matrices
\cite{hinrichs2011johnson,krahmer2014suprema,rauhut2009circulant,rauhut2012restricted,romberg2009compressive,vybiral2011variant, zhang2013new} and others. Here no nonlinear mappings are used. In this paper we are interested mainly in structured nonlinear embeddings.
Not much is known in that area. All known results target very specific kernels, such as the angular similarity kernel \cite{choromanska2015binary, gong2012angular,yi2015binary,yu2015binary} and Gaussian kernel \cite{le2013fastfood}, and/or use a fixed budget of randomness to construct a structured matrix 
(all above but \cite{choromanska2015binary}) since the construction of the structured matrix is very rigid. 

This is all relevant to neural networks which have matrix-vector multiplication and nonlinear transformations at their core. Structured matrices have been used in neural networks to speed up matrix-vector computation, decrease storage and sharply reduce the number of training parameters without much affecting performance \cite{cheng2015exploration, moczulski2015acdc,sindhwani2015structured,yang2014deep}. Random weight matrices eliminate training weight matrices altogether \cite{choromanska2015binary,saxe2011random} and provide a pathway for analyzing neural networks \cite{giryes2015deep}. Explicitly integrating out random weights produces new kernel-based algorithms, such as the arc-cosine kernel \cite{cho2009kernel}.

Even though there is some agreement which structured approaches may work in 
practice for specific applications, general characteristics of structured matrices producing high quality embeddings for general nonlinear mappings as well as the underlying theoretical 
explanation was not known.  
In this paper we propose such a general framework that covers as special cases most of the existing structured mechanisms and can be automatically adjusted to different ``budgets
of randomness'' used for structured matrices. The latter property enables us to smoothly transition from the completely unstructured setting, where the quality guarantees are
stronger but computational cost is larger, to the structured setting, where we can still prove quality results but the computations are sped up and space 
complexity is drastically reduced. At the same time we show an intriguing connection between guarantees regarding the quality of the produced structured nonlinear embeddings 
and combinatorial properties of some graphs associated with the structured models and encoding in a compact form correlations between different rows of the structured matrix. 

The randomized function $\Lambda_{f} : \mathbb{R}^{k} \rightarrow \mathbb{R}$ for which we propose a structured computational model takes as an input $k$ vectors $\textbf{v}^{1},...,\textbf{v}^{k} \in \mathbb{R}^{n}$.
Thus the model is general enough to handle relations involving more than $k=2$ vectors. Each vector is preprocessed by multiplying it with a 
Gaussian matrix $\textbf{R}=[\textbf{r}^{1},...,\textbf{r}^{m}]^\top$, where $\textbf{r}^{i}$ stands for the $i^{th}$ row, and pointwise nonlinear mapping $f$ following it. 
We then apply
another mapping $\beta:\mathbb{R}^{k} \rightarrow \mathbb{R}$ separately on each dimension (in the context of kernel computations mapping $\beta$ is simply a 
product of its arguments) and finally agglomerate the results for all $m$ dimensions by applying another mapping $\Psi:\mathbb{R}^{m} \rightarrow \mathbb{R}$. 
Functions $\Lambda_{f}$ defined in such a way, although they may look complicated at first glance, encode all the distance/kernels' similarity measures that we have 
mentioned so far. We show that the proposed general structured approach enables us to get strong concentration results regarding the computed structured
approximation of $\Lambda_{f}$. 

Presented structured approach was considered in \cite{choromanski_sindhwani2016}, but only for the computations of specific kernels
(and the results heavily relied on the properties of these kernels). Our concentration results are also much sharper, since we do not rely
on the moments method and thus cover datasets of sizes superpolynomial in $m$. In \cite{choromanski_sindhwani2016} the authors empirically
verify the use of the multi-block ``Toeplitz-like'' matrices for feature set expansion which is not our focus here since we reduce dimensionality.

This paper is organized as follows:
\begin{itemize}
\item in Section 2 we introduce our structured mechanism and propose an algorithm using it for fast computations of nonlinear embeddings,
 \item in Section 3 we present all theoretical results,
 \item in the Appendix we prove all theoretical results that were not proved in the main body of the paper.
\end{itemize}

\section{Structured mechanism for fast nonlinear embeddings}
\label{sec:structured_one}

\subsection{Problem formulation}

We consider in this paper functions of the form: 
\begin{equation}
\label{main_function}
\Lambda_{f}(\textbf{v}^{1},...,\textbf{v}^{k}) = \mathbb{E}\left[\Psi\Big(\beta\big(f(y_{1,1}),...,f(y_{1,k})\big),...,\beta\big(f(y_{m,1}),...,f(y_{m,k})\big)\Big)\right],
\end{equation}
where: $\textbf{v}^{1},...,\textbf{v}^{k} \in \mathbb{R}^{n}$, $y_{i,j} = \langle \textbf{r}^{i},\textbf{v}^{j} \rangle$ for $i \in \{1,...,m\}$, $j \in \{1,...,k\}$, expectation is taken over independent random choices $\textbf{r}^{1},...,\textbf{r}^{m}$ from $n$-dimensional Gaussian distributions where each entry is independently taken from $\mathcal{N}(0,1)$, $\langle \cdot \rangle$ denotes the dot product
and $f: \mathbb{R} \rightarrow \mathbb{R}$, $\beta: \mathbb{R}^{k} \rightarrow \mathbb{R}$, $\Psi: \mathbb{R}^{m} \rightarrow \mathbb{R}$ for some integers $k,m > 0$.
We will assume that $\textbf{v}^{1},...,\textbf{v}^{k}$ are linearly independent.
$\Lambda_f$ is always spherically-invariant.

Below we present several examples of machine learning distances/similarity measures that can be expressed in the form: $\Lambda_{f}(\textbf{v}^{1},...,\textbf{v}^{k})$. Although our theoretical results will cover more general cases, the examples will focus on when $k=2$, $\Psi(x_1,\ldots,x_m)=\frac{x_1+\ldots +x_m}{m}$ and $\beta(x,y)=x\cdot y$. Equation~(\ref{main_function}) simplifies:
\begin{equation}
\label{main_function_simple}
\Lambda_{f}(\textbf{v}^{1},\textbf{v}^{2}) = \mathbb{E}\left[f(\langle \textbf{r}, \textbf{v}^{1}\rangle )\cdot f(\langle \textbf{r}, \textbf{v}^{2}\rangle)\right].
\end{equation}
This defines a wide class of spherically invariant kernels characterized by $f$. Our results will cover general functions $f$ that do not have to be linear, or even not continuous. \newline

\textbf{1. Euclidean inner product}

This is probably the most basic example. Let $f(x)=x$. 
One can easily note that $\Lambda_{f}(\textbf{v}^{1},\textbf{v}^{2}) = \langle \textbf{v}^{1}, \textbf{v}^{2}\rangle$. Furthermore, if we take $m$ large enough then the value that will be computed,
$\Psi(\beta(\langle \textbf{r}^{1},\textbf{v}^{1}\rangle, \langle \textbf{r}^{1},\textbf{v}^{2} \rangle),...,\beta(\langle \textbf{r}^{m},\textbf{v}^{1}\rangle, \langle \textbf{r}^{m},\textbf{v}^{2} \rangle))$, is well concentrated around its mean and that follows from standard concentration inequalities.
Since $m$ is usually much smaller than $n$, then one can think about the mapping $\textbf{v} \rightarrow 
(\langle \textbf{r}^{1},\textbf{v} \rangle,...,\langle \textbf{r}^{m},\textbf{v} \rangle)$ as a dimensionality reduction procedure that preserves Euclidean inner products. And indeed, the above transformation is well known in the literature as the aforementioned \textit{Johnson-Lindenstrauss transform}. \newline

\textbf{2. Angular distance}
\label{example:circ}

Now we want to express the angular distance $\theta_{\textbf{v}^{1},\textbf{v}^{2}}$ between two given 
vectors $\textbf{v}^{1},\textbf{v}^{2}$ (that are of not necessarily of the same magnitude) as a 
function $\Lambda_{f}(\textbf{v}_{1},\textbf{v}_{2})$. We take $f$ as the heaviside step function, 
i.e. $f(x) = 1$ for $x \geq 0$ and $f(x) = 0$ otherwise. From basic properties of the Gaussian distribution \cite{choromanska2015binary} 
one can deduce that $\Lambda_{f}(\textbf{v}^{1},\textbf{v}^{2}) = \frac{\theta_{\textbf{v}^{1},\textbf{v}^{2}}}{2\pi}$. Note that, since 
in this setting $f$ takes values from a discrete set, the mapping 
$\textbf{v} \rightarrow (f(\langle \textbf{r}^{1},\textbf{v} \rangle),...,f(\langle \textbf{r}^{m},\textbf{v} \rangle))$ is not only a 
dimensionality reduction, but in fact a hashing procedure encoding angular distance between vectors in terms of the dot product between 
corresponding hashes taken from $\{0,1\}^{m}$.  \newline 

\textbf{3. Arc-cosine and Gaussian kernels}

The arc-cosine kernel \cite{cho2009kernel} is parametrized by $b=0,1,\ldots $, with $f(x) = x^b$ for $x \geq 0$ and $f(x) = 0$ otherwise. For $b=0$ its computation reduces to the computation of the angular distance. If $b=1$ then $f$ is the linear rectifier. Higher-order arc-cosine kernels can be obtained by recursively applying that transformation and thus can be approximated by recursively applying the presented mechanism.
Gaussian kernels can be computed by a similar transformation, with $f$ replaced by trigonometric functions: $\sin(x)$ and $\cos(x)$.
\newline

Our goal is to compute $\Lambda_{f}(\textbf{v}^{1},...,\textbf{v}^{k})$ efficiently. Since the random variable in equation~(\ref{main_function}),
$\Psi(\beta(f(y_{1,1}),...,f(y_{1,k})),...,\beta(f(y_{m,1}),...,f(y_{m,k})))$, is usually well 
concentrated around its mean $\Lambda_{f}(\textbf{v}^{1},...,\textbf{v}^{k})$, 
its straightforward computation gives a good quality approximation of $\Lambda_{f}(\textbf{v}^{1},...,\textbf{v}^{k})$. 
This, as already mentioned, unfortunately usually requires $\Omega(mn)$ time and $\Omega(mn)$ space. 

We are interested in providing a good quality approximation of $\Lambda_{f}(\textbf{v}^{1},...,\textbf{v}^{k})$ 
in subquadratic time and subquadratic (or even linear) space. To achieve this goal, we will replace 
the sequence of independent Gaussian vectors $\textbf{r}^{1},...,\textbf{r}^{m}$ by Gaussian vectors $\textbf{a}^{1},...,\textbf{a}^{m}$ that are no 
longer independent, yet provide us speed-ups in computations and reduce storage complexity. 
Our mechanism will use $t$ independent Gaussian variables $g_{i}$ to construct structured matrices $\textbf{A}=\{\textbf{a}^{i}:i=1,...,m\}$, 
where $\textbf{a}^{i}$ stands for the $i^{th}$ row. Parameter $t$ enables us to make a smooth transition from the unstructured setting 
(large values of $t$), where we obtain stronger concentration results regarding $\Lambda_{f}(\textbf{v}^{1},...,\textbf{v}^{k})$
but need more space and computational time, to the structured setting, where concentration results are weaker (yet still strong 
enough so that the entire mechanism can be applied in practice) but computation can be substantially sped up and the storage complexity is much smaller.

The core of our structured mechanism is the construction of our structured matrices. 
In the next subsection we will present it and show why several structured matrices considered so far in that context are very special cases of our general structured approach.
\subsection{Structured linear projections}
\label{sec:structured}
Consider a vector of independent Gaussian variables $\textbf{g} = (g_{0},...,g_{t-1})$ taken from $\mathcal{N}(0,1)$.
Let $\mathcal{P} = (\textbf{P}_{1},...,\textbf{P}_{m})$ be a sequence of matrices, where: $\textbf{P}_{i} \in \mathbb{R}^{t \times n}$. 
We construct the rows of our structured matrix $\textbf{A}$ as follows:
\begin{equation}
\textbf{a}^{i} = \textbf{g} \cdot \textbf{P}_{i}
\end{equation}
for $i=1,...,m$.
Thus the entire structured mechanism is defined by parameter $t$, and a sequence $\mathcal{P}$. We call it a \textit{$\mathcal{P}$-model}.

In practice we will not store the entire sequence but just a matrix $\textbf{A}$ that is obtained by applying matrices of $\mathcal{P}$ to the vector $\textbf{g}$.
We denote the $r^{th}$ column of matrix $\textbf{P}_{i}$ as $\textbf{p}^{i}_{r}$.
We will assume that sequence $\mathcal{P}$ is \textit{normalized}.
\begin{definition} (Normalization property)
A sequence of matrices $\mathcal{P}=(\textbf{P}_{1},...,\textbf{P}_{m})$ is normalized if for any fixed $i$ and $r$ 
the expression $\|\textbf{p}^{i}_{r}\|_{2}=1$. 
\end{definition}

Note that from the normalization property it follows that every $\textbf{a}^{i}$ is a Gaussian vector with elements 
from $\mathcal{N}(0,1)$. 
We will use another useful notation, namely: $\sigma_{i_{1},i_{2}}(n_{1},n_{2}) = \langle \textbf{p}^{i_1}_{n_{1}},\textbf{p}^{i_2}_{n_{2}}\rangle$ 
for $1 \leq n_{1},n_{2} \leq n$ and $1 \leq i_{1},i_{2} \leq m$. 
Note that when $i=i_{1}=i_{2}$ then $\sigma_{i,i}(n_{1},n_{2})$ reduces to the cross-correlation between 
$n_{1}^{th}$ column and $n_{2}^{th}$ column of $\textbf{P}_{i}$.
If the following is also true: $n_{1}=n_{2}$ then $\sigma_{i,i}(n_{1},n_{2}) = 1$.

We define now graphs associated with a given $\mathcal{P}$-model that we call the \textit{coherence graphs}.

\begin{definition}(Coherence graphs)
Let $1 \leq i_{1},i_{2} \leq m$. We define by $\mathcal{G}_{i_{1},i_{2}}$ an undirected graph
with the set of vertices $V(\mathcal{G}_{i_{1},i_{2}}) = \{\{n_{1},n_{2}\} : 1 \leq n_{1} < n_{2} \leq n$ and $\sigma_{i_{1},i_{2}}(n_{1},n_{2}) \neq 0\}$ and the 
set of edges $E(\mathcal{G}_{i_{1},i_{2}}) = \{\{\{n_{1},n_{2}\},\{n_{2},n_{3}\}\}:\{n_{1},n_{2}\},\{n_{2},n_{3}\} \in V(\mathcal{G}_{i_{1},i_{2}})\}$. 
In other words, edges are between these vertices for which the corresponding $2$-element subsets intersect.
\end{definition}

We denote by $\chi(i_{1},i_{2})$ the \textit{chromatic number of the graph $\mathcal{G}_{i_{1},i_{2}}$}, i.e. the minimum number of 
colors that need to be used to color its vertices in such a way that no two adjacent vertices get the same color.

The correlation between different rows of the structured matrix $\textbf{A}$ obtained from the sequence of 
matrices $\mathcal{P}$ and the ``budget of randomness'' $(g_{0},...,g_{t-1})$ can be measured very accurately by three quantities that 
we will introduce right now. These quantities give a quantitative measure of the ``structuredness'' of a given matrix $\textbf{A}$ and play important role in 
establishing theoretical results for general structured models.

\begin{definition}(Chromatic number of a $\mathcal{P}$-model)
The chromatic number $\chi[\mathcal{P}]$ of a $\mathcal{P}$-model is defined as:
\begin{equation}
\chi[\mathcal{P}] = \max_{1 \leq i,j \leq m} \chi(i,j).
\end{equation}
\end{definition}

Thus the chromatic number of a $\mathcal{P}$-model is the maximum chromatic number of a coherence graph.

\begin{definition}(Coherence and unicoherence of a $\mathcal{P}$-model)
The coherence of a $\mathcal{P}$-model is defined as:
\begin{equation}
\mu[\mathcal{P}] = \max_{1 \leq i,j \leq m} \sqrt{\frac{\sum_{1 \leq n_{1} < n_{2} \leq n} \sigma_{i,j}^{2}(n_{1},n_{2})}{n}}.
\end{equation}
The unicoherence of a $\mathcal{P}$-model is given by the following formula:
\begin{equation}
\tilde{\mu}[\mathcal{P}] = \max_{1 \leq i<j \leq m} \sum_{n_{1}=1}^{n}
|\sigma_{i,j}(n_{1},n_{1})|.
\end{equation}
\end{definition}

We will show in the theoretical section that as long as $\chi[\mathcal{P}],\mu[\mathcal{P}]$ are at most polynomial in $n$ and $\tilde{\mu}[\mathcal{P}] = o(\frac{n}{\log^{2}(n)})$,
strong concentration results regarding the quality of the structured embedding can be derived.
Below we show many classes of matrices $\textbf{A}$ that can be constructed according to the presented mechanism and for which all three quantities have desired orders of magnitude. \newline

\textbf{1. Circulant matrices}

This is a flagship example of the structured approach \cite{hinrichs2011johnson, krahmer2014suprema, rauhut2009circulant, rauhut2012restricted, romberg2009compressive, vybiral2011variant, zhang2013new}. In that setting $t=n$ and the structured Gaussian matrix $\textbf{A}$ is obtained from a single 
Gaussian vector $(g_{0},...,g_{n-1}) \in \mathbb{R}^{n}$ by its right shifts, i.e.
$\textbf{A}$ is of the form:
\begin{equation}
\textbf{A}_{circ} = 
\left( \begin{array}{cccc}
g_{0}  &  g_{1} & ... & g_{n-1}\\
g_{n-1}  &  g_{0} & ... & g_{n-2}\\
...  &  ... & ... & ...\\
g_{n-m+1}  &  g_{n-m+2} & ... & g_{2n-m}\\
\end{array} \right),
\end{equation}

where the operations on indices are taken modulo $n$.
Matrix $\textbf{A}_{circ}$ can be obtained from the presented pipeline by using budget of randomness $(g_{0},...,g_{n-1})$ and a sequence 
of matrices $\mathcal{P}=(\textbf{P}_{1},...,\textbf{P}_{m})$, where $\textbf{P}_{1}, \textbf{P}_{2}, \textbf{P}_{3}, ...$ are respectively: 
\begin{equation*}
\resizebox{0.96\linewidth}{!}{%
 $\left( \begin{array}{cccccc}
\fbox{\textbf{1}} & 0 & ... & ... & ... & 0\\
0 & \fbox{\textbf{1}} & ... & ... & ... & 0 \\
... & ... & ... & ... & ... & ...\\
... & ... & ... & ... & ... & ...\\
0 & ... & ... & ...& ...& \fbox{\textbf{1}} \\
\end{array} \right),
 \left( \begin{array}{cccccc}
0 & \fbox{\textbf{1}} & ... & ... & ...& 0\\
0 & 0 & \fbox{\textbf{1}} & ... & ... & 0 \\
... & ... & ... & ... & ... & ...\\
... & ... & ... & ... & ... & ...\\
\fbox{\textbf{1}} & ... & ... & ... &  ... & 0 \\
\end{array} \right),
\left( \begin{array}{cccccc}
0 & 0 & \fbox{\textbf{1}} & ... & ... & 0\\
0 & 0 & 0 & \fbox{\textbf{1}} & ... & 0 \\
... & ... & ... & ... & ... & ...\\
\fbox{\textbf{1}} & ... & ... & ... &  ... & 0 \\
0 & \fbox{\textbf{1}} & ... & ... &  ... & 0 \\
\end{array} \right),...$
}
\end{equation*}

Each $\mathcal{P}$ is normalized. Furthermore:
\begin{equation}
\sigma_{i_{1},i_{2}}(n_{1},n_{2}) =
\left\{
	\begin{array}{ll}
		0  & \mbox{if }  n_{1} - n_{2} \neq i_{1} - i_{2}  \mod n, \\
		1 & \mbox{otherwise.} 
	\end{array}
\right.
\end{equation}
The above observation implies that each $\mathcal{G}_{i_{1},i_{2}}$ is a collection of vertex disjoint cycles (since each vertex has degree two),
thus in particular $\chi(i_{1},i_{2})$ is at most $3$ (see Figure \ref{fig:circulant} for an illustration). We conclude that $\chi[\mathcal{P}] \leq 3$.
One can also see that $\mu[\mathcal{P}] = O(1)$ and $\tilde{\mu}[\mathcal{P}] = 0$. \newline

\textbf{2. Toeplitz matrices}

Toeplitz matrices that are also used frequently in the structured setting can be modeled by our mechanism by increasing the budget of randomness from $t=n$ to $t=n+m-1$.
A Toeplitz Gaussian matrix is of the form:
\begin{equation}
\textbf{A}_{Toeplitz} = 
\left( \begin{array}{ccccccc}
g_{0}  &  g_{1} & g_{2} & g_{3} & ... & ... & g_{n-1}\\
g_{n}  &  g_{0} & g_{1} & g_{2} & ... & ... & g_{n-2}\\
g_{n+1}  &  g_{n} & g_{0} & g_{1} & ... & ... & g_{n-3}\\
g_{n+2}  &  g_{n+1} & g_{n} & g_{0} & ... & ... & g_{n-4}\\
... &  ... & ... & ... & ... & ... & ...\\
g_{n+m-2}  &  g_{n+m-3} & ... & ... & ... & ... & g_{n-m}\\
\end{array} \right)
\end{equation}

In other words, a Toeplitz matrix is constant along each diagonal.
In that scenario matrices $\textbf{P}_{1},...,\textbf{P}_{m}$ are of the form:
\begin{equation*}
\resizebox{0.96\linewidth}{!}{%
$ \left( \begin{array}{ccccccc}
\fbox{\textbf{1}} & 0 & ... & ... & ... & ... & 0\\
0 & \fbox{\textbf{1}} & ... & ... & ... & ... & 0 \\
... & ... & ... & ... & ... & ... & ...\\
0 & 0 & 0 & 0 & ... & ... & \fbox{\textbf{1}}\\
0 & 0 & 0 & 0 & ... & ... & 0\\
... & ... & ... & ...& ...& ... & ...\\
... & ... & ... & ...& ...& ... & ...\\
0 & 0 & 0 & 0 & ... & ... & 0\\
0 & 0 & 0 & 0 & ... & ... & 0\\
0 & 0 & 0 & 0 & ... & ... & 0\\
\end{array} \right),
 \left( \begin{array}{ccccccc}
0 & \fbox{\textbf{1}} & ... & ... & ...& ... & 0\\
0 & 0 & \fbox{\textbf{1}} & ... & ... & ... & 0\\
... & ... & ... & ... & ... & ... & ...\\
0 & 0 & 0 & 0 & ... & ... & \fbox{\textbf{1}}\\
0 & 0 & 0 & 0 & ... & ... & 0\\
\fbox{\textbf{1}} & ... & ... & ... &  ... & ...& 0 \\
... & ... & ... & ... & ... & ... & ...\\
0 & 0 & 0 & 0 & ... & ... & 0\\
0 & 0 & 0 & 0 & ... & ... & 0\\
0 & 0 & 0 & 0 & ... & ... & 0\\
\end{array} \right),
\left( \begin{array}{ccccccc}
0 & 0 & \fbox{\textbf{1}} & ... & ... & ... & 0\\
0 & 0 & 0 & \fbox{\textbf{1}} & ... & ... & 0 \\
... & ... & ... & ... & ... & ... & ...\\
0 & ... & ... & ... &  ... & ... & \fbox{\textbf{1}} \\
0 & 0 & 0 & 0 & ...& ... & 0\\
0 & 0 & 0 & 0 & ... & ... & 0\\
0 & \fbox{\textbf{1}} & ... & ... &  ... & ... & 0 \\
\fbox{\textbf{1}} & 0 & ... & ...& ...& ... & 0\\
... & ... & ... & ... &  ... & ... & ... \\
0 & 0 & 0 & 0 & ... & ... & 0\\
\end{array} \right),$%
}
\end{equation*}

and so on. Again, one can easily note that $\mathcal{P}$ is normalized.
For Toeplitz matrices we have:
\begin{equation}
\sigma_{i_{1},i_{2}}(n_{1},n_{2}) =
\left\{
	\begin{array}{ll}
		0  & \mbox{if }  n_{1} - n_{2} \neq i_{1} - i_{2}  \mod n, \\
		1-c(i_{1},i_{2},n_{1},n_{2}) & \mbox{otherwise}, 
	\end{array}
\right.
\end{equation}

for some $c_{i_{1},i_{2},n_{1},n_{2}} \in \{0,1\}$.
By increasing the budget of randomness we managed to decrease $|\sigma_{i_{1},i_{2}}(n_{1},n_{2})|$ 
and that implies better concentration results. Bounds for $\chi[\mathcal{P}]$, $\mu[\mathcal{P}]$, $\tilde{\mu}[\mathcal{P}]$ 
from the circulant setting are valid also here. Figures \ref{fig:circulant} and \ref{fig:toeplitz} demonstrate how increasing the ``budget of randomness'' decreases
the chromatic numbers of the corresponding coherence graphs and thus also parameter $\chi[\mathcal{P}]$.
There we compare the circulant structured approach with the Toeplitz structured approach, where the ``budget of randomness''
is increased. \newline

\textbf{3. Hankel matrices}

These can be obtained in the analogous way as Toeplitz matrices since each Hankel matrix is defined as
the one in which each ascending skew-diagonal from left to right is constant. Thus it is a reflected image 
of the Toeplitz matrix and in particular shares with it all structural properties considered above. \newline

\textbf{4. Matrices with low displacement rank}
\label{sec:displacement}

Several classes of structured matrices can be described by the low value of the parameter called \textit{displacement rank} \cite{kailath1979displacement, kailath1995displacement, sindhwani2015structured}.
In particular, classes of matrices described by the formula: 
\begin{equation}
\textbf{A}_{ldr} = \sum_{i=1}^{r} \textbf{Z}_{1}(\textbf{g}^{i})\textbf{Z}_{-1}(\textbf{h}^{i}),
\end{equation}
where: $\textbf{g}^{1},...,\textbf{g}^{r}, \textbf{h}^{1},...,\textbf{h}^{r}$ are given $n$-dimensional vectors ($\textbf{g}^{i}=(g^{i}_{1},...,g^{i}_{n})$)
and matrices $\textbf{Z}_{1}, \textbf{Z}_{-1}$ are the circulant-shift and skew-circulant-shift matrix respectively (see \cite{sindhwani2015structured} for definitions).
Matrices $\textbf{A}_{ldr}$ have displacement rank $r$ and cover
such families as: circulant and skew-circulant matrices, Toeplitz matrices, inverses of Toeplitz matrices (for $r \geq 2$), products of the form $\textbf{A}_{1}...\textbf{A}_{s}$ for $r \geq 2s$ and 
all linear combinations of the form $\sum_{i=1}^{p} \beta_{i} \textbf{A}^{(i)}_{1}...\textbf{A}^{(i)}_{s}$, 
where $r \geq 2sp$ and $\textbf{A}_{j}, \textbf{A}^{(i)}_{j}$ are Toeplitz matrices or inverses of Toeplitz matrices \cite{sindhwani2015structured}.

Assume now that $\textbf{g}^{1},...,\textbf{g}^{r}$ are independent Gaussian vectors. Note that then $\textbf{A}_{ldr}$ is a special instance of the $\mathcal{P}$-model, 
where the budget of randomness is of the form:
$\textbf{g}=(g^{1}_{1},...,g^{1}_{n},g^{2}_{1},...,g^{2}_{n},...,g^{r}_{1},...,g^{r}_{n})$, $\textbf{P}_{1} \in \mathbb{R}^{nr \times n}$ is 
obtained by vertically stacking matrices 
$\textbf{Z}_{-1}(\textbf{h}^{i})$ for $i=1,...,r$ and $\textbf{P}_{i}$ is obtained from $\textbf{P}_{i-1}$ for $i=2,...$ by vertical circulant shifts applied block-wise.

\begin{figure}
\vspace{-0.12in}
\vspace{-0.1in}
\centering
\includegraphics[width = 4.2in]{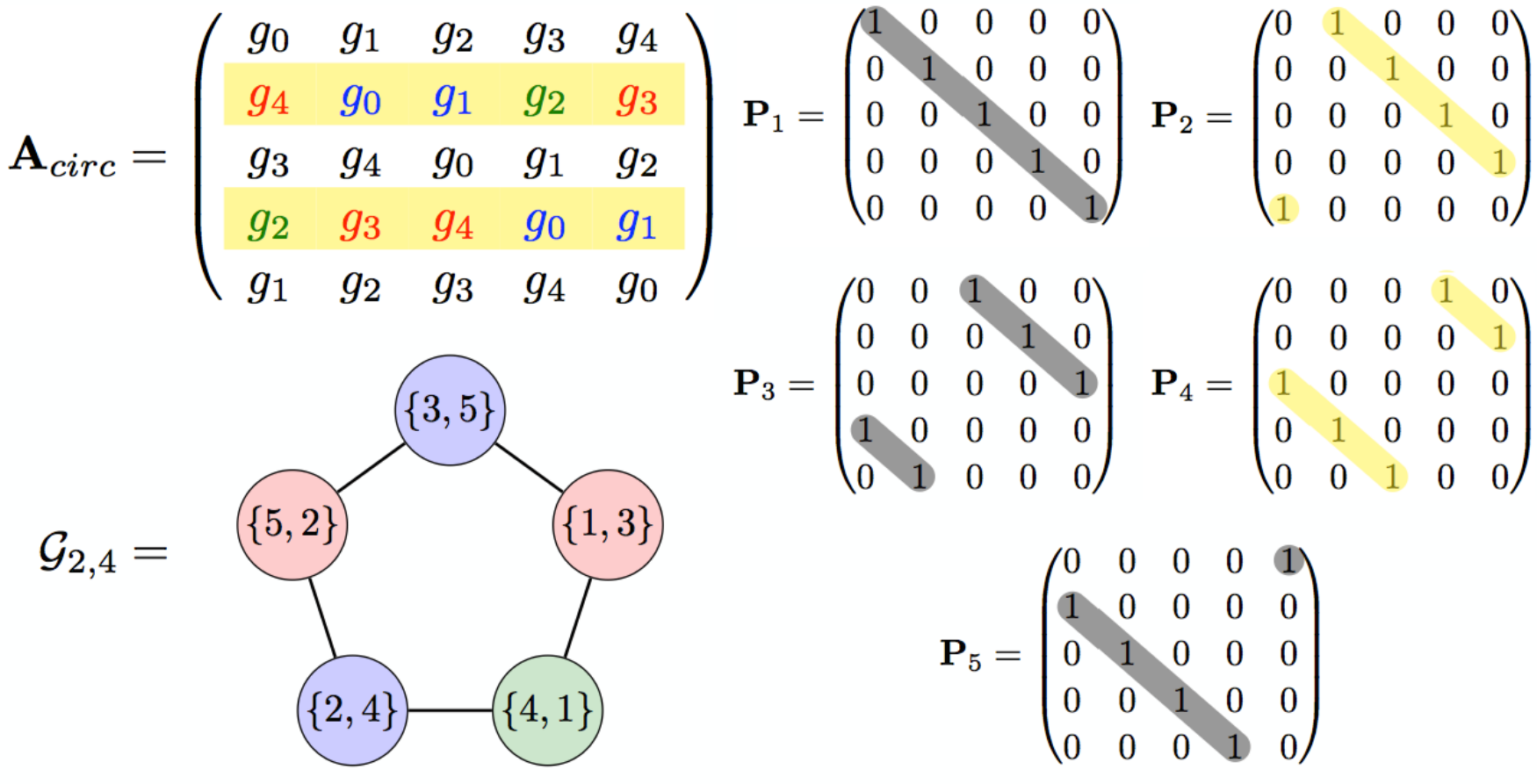}
\vspace{-0.1in}
\caption{Circulant Gaussian matrix with two highlighted rows, corresponding matrices $\textbf{P}_{i}$ and a coherence graph. 
The highlighted rows were obtained by multiplying the ``budget of randomness'' vector $\textbf{g}$
by two highlighted matrices $\textbf{P}_{i}$. The corresponding coherence graph is colored in red, blue and green, matching the coloring of entries in the highlighted rows of $\textbf{A}_{circ}$. The graph is a cycle of length $5$. Since it has an odd number of vertices it requires the use of just three colors for no two adjacent vertices to have the same color, and so its chromatic number is $3$. \\}
\label{fig:circulant}
\vspace{-0.32in}
\end{figure}

There exist several purely deterministic and simple random constructions of the sequence $\textbf{h}^{1},...,\textbf{h}^{r}$ for which the considered parameters $\chi[\mathcal{P}]$, $\mu[\mathcal{P}]$, $\tilde{\mu}[\mathcal{P}]$ of the related $\mathcal{P}$-model are in the
desired range of magnitude. 
For instance, fix some constant $a>0$ and choose at random $a$ nonzero dimensions,
independently for each $\textbf{h}^{i}$. Choose the value of each nonzero dimension to be
$+\frac{1}{\sqrt{ar}}$ with probability $\frac{1}{2}$ and $-\frac{1}{\sqrt{ar}}$ otherwise, independently for each dimension.

In that setting each column of each $\textbf{P}_{i}$ has $L_{2}$ norm equal to one.
One can also note that $\chi[\mathcal{P}] = O(1)$, $\mu[\mathcal{P}]=O(1)$. Furthermore, $\tilde{\mu}[\mathcal{P}]=o(\frac{n}{\log^{2}(n)})$ with high probability if $r$ is large enough (but still satisfies $r=o(n)$).
Thus these matrices can be also used in the algorithm we are about to present now and are covered by our theoretical results. 
It was heuristically observed before that displacement rank $r$ is a useful parameter for tuning the level of ``structuredness'' of these matrices and increasing $r$ may potentially lead to better quality embeddings \cite{sindhwani2015structured}. 
Our framework explains it. Larger values of $r$ trivially imply larger ``budgets of randomness'', i.e. stronger 
concentration results for $\sigma_{i_{1},i_{2}}(n_{1},n_{2})$ and thus much smaller values of $|\sigma_{i_{1},i_{2}}(n_{1},n_{2})|$ in practice. 
That decreases the value of the coherence $\mu[\mathcal{P}]$ and unicoherence $\tilde{\mu}[\mathcal{P}]$ of the related $\mathcal{P}$-model and thus, as we will see in the theoretical section, improves concentration results.

\begin{figure}
\vspace{-0.12in}
\vspace{-0.1in}
\centering
\includegraphics[width = 4.3in]{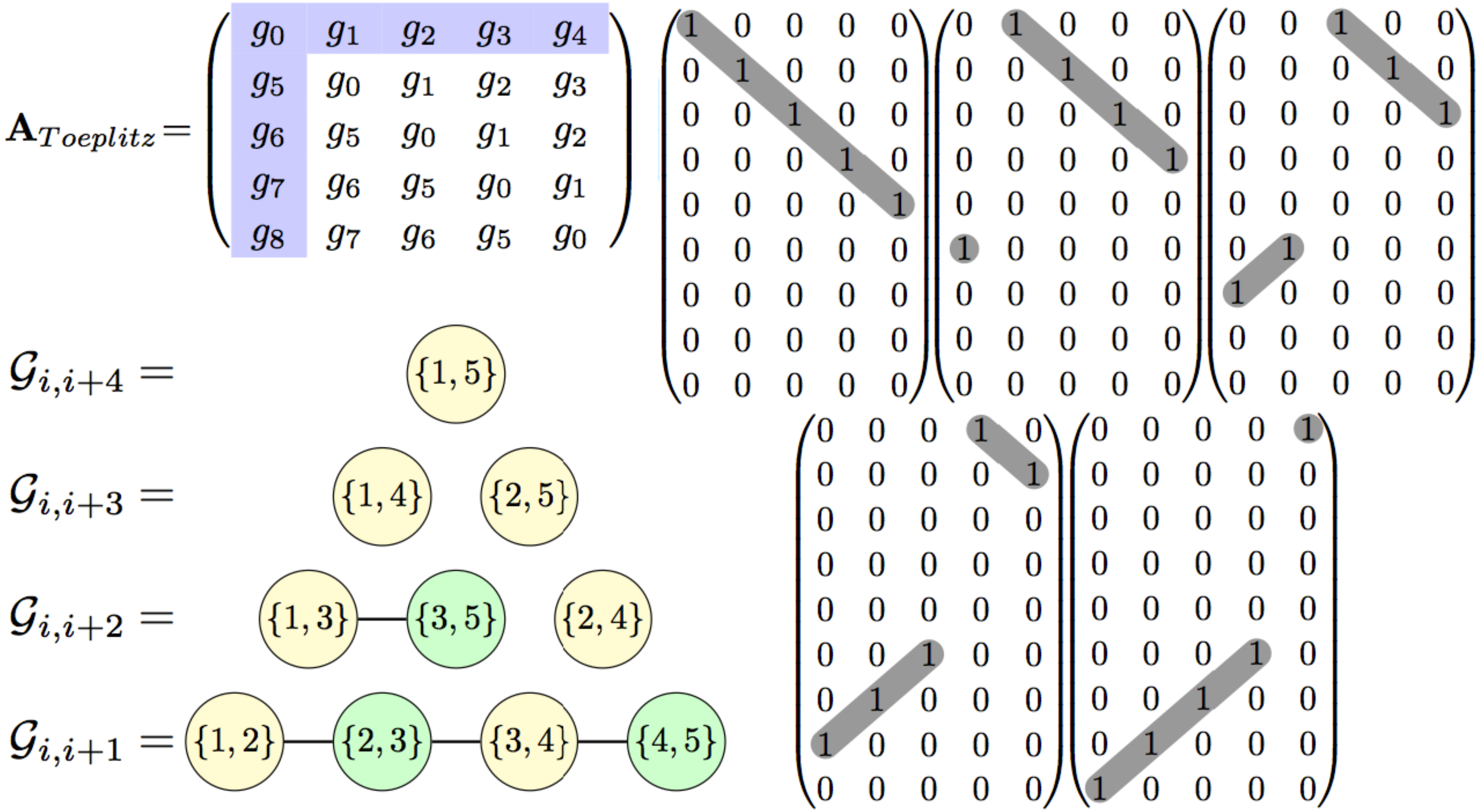}
\vspace{-0.1in}
\caption{Toeplitz Gaussian matrix. This time highlighted row and column correspond to the ``budget of randomness'' vector $\textbf{g}$. On the right: matrices $\textbf{P}_{i}$
of the corresponding $\mathcal{P}$-model. Each row below the Toeplitz Gaussian matrix represents a coherence graph. Any coherence graph of the corresponding
$\mathcal{P}$-model is isomorphic to one of the four presented graphs. Note that for any of these graphs it suffices to use two colors to color them. Thus $\chi[\mathcal{P}] = 2$. Note that a larger ``budget of randomness'' for Toeplitz matrices implies smaller $\chi[\mathcal{P}]$ than in the circulant setting (for the graph from Figure \ref{fig:circulant} we needed three colors)
and that, as we will see soon, will imply better concentration results. \\}
\label{fig:toeplitz}
\vspace{-0.32in}
\end{figure}

\subsection{The Algorithm}
\label{sec:algorithm}

We are ready to describe a general algorithm for fast nonlinear embeddings via structured matrices.
Consider a function
\begin{equation}
\Lambda_{f}(\textbf{v}^{1},...,\textbf{v}^{k}) = \mathbb{E}[\Psi(\beta(f(y_{1,1}),...,f(y_{1,k})),...,\beta(f(y_{m,1}),...,f(y_{m,k})))],
\end{equation}
where: $y_{i,j} = \langle \textbf{r}^{i},\textbf{v}^{j} \rangle$ and $\textbf{r}^{i}$s are independent Gaussian vectors. 
We want to compute $\Lambda_f$ efficiently for a dataset $\mathcal{X}$ of $n$-dimensional points.

Structured matrix $\textbf{A}$ that allows us to do this is constructed by choosing the budget of randomness 
$\textbf{g}=(g_{1},...,g_{t-1})$ for a given parameter $t>0$ and a sequence of matrices 
$\mathcal{P}=\{\textbf{P}_{1},...,\textbf{P}_{m}\}$ such each element of $\textbf{g} \cdot \textbf{P}_{i}$ has the same distribution as the corresponding element of $\textbf{r}^{i}$ for $i=1,...,m$.
By choosing different $\textbf{P}_{i}$s and budgets of randomness $\textbf{g}$ one can smoothly balance between speed of the transform/storage complexity and its quality.

\textbf{Step 1:} Dataset $\mathcal{X}$ is first preprocessed by multiplying each datapoint by a matrix $\textbf{D}_{1}\textbf{HD}_{0}$, where $\textbf{H}$ is an arbitrary 
$L_{2}$-normalized Hadamard matrix 
and $\textbf{D}_{0},\textbf{D}_{1}$ are independent random diagonal matrices with nonzero entries taken from the set $\{-1,+1\}$, each independently at random and with probability $\frac{1}{2}$.

\textbf{Step 2:} Dataset $\mathcal{X}^{\prime}$ is transformed by multiplying it by  a structured matrix $\textbf{A}$.
Then function $f$ is applied pointwise to each datapoint of $\textbf{A}\mathcal{\hat{X}}$. For any given $\textbf{v}^{1},...,\textbf{v}^{k} \in \mathbb{R}^{n}$ the approximated
value of $\Lambda_{f}(\textbf{v}^{1},...,\textbf{v}^{k})$ is calculated as:
\begin{equation}
\label{structured}
\Psi(\beta(\textbf{v}^{f,1}_{1},...,\textbf{v}^{f,k}_{1}),...,\beta(\textbf{v}^{f,1}_{m},...,\textbf{v}^{f,k}_{m})),
\end{equation} 
where $\textbf{v}^{f,i}=f(\textbf{AD}_{1}\textbf{HD}_{0}\textbf{v}^{i})$ with $f$ applied pointwise ($\textbf{v}^{f,i}_{j}$ is the $j^{th}$ dimension of $\textbf{v}^{f,i}$).

In practice, for $k=2$ equation~(\ref{structured}) very often boils down to computing the standard dot product
between $\textbf{v}^{f,1}$ and $\textbf{v}^{f,2}$ (as is the case for any $\Lambda_{f}$ in the form of equation~(\ref{main_function_simple})).

By using structured matrices listed in Section \ref{sec:structured} one can significantly reduce storage complexity of the entire computational mechanism. Indeed:
\begin{remark}
 Circulant, Toeplitz, Hankel matrices or products/linear combinations of the $O(1)$ number of Toeplitz matrices/inverses of Toeplitz matrices can 
 be stored in linear space. Hadamard matrices can be efficiently (i.e. in the subquadratic time) computed on-the-fly and do not have to be stored.
\end{remark} 
More importantly, the presented structured pipeline gives significant computational speed-ups
over the standard approach requiring quadratic time. This is a direct implication of the fact that matrix-vector multiplication, which is a main computational bottleneck of the nonlinear embeddings pipelines, can be performed in subquadratic time for many classes of the structured matrices covered by the presented scheme, in particular for all special classes listed by us so far. Indeed:
 
\begin{remark}
For classes of matrices with bounded displacement rank matrix-vector multiplication can be performed in subquadratic time. These classes cover in particular: circulant and skew-circulant matrices, Toeplitz matrices, Hankel matrices, inverses of Toeplitz matrices (for $r \geq 2$), products of the form $\textbf{A}_{1}...\textbf{A}_{s}$ for $r \geq 2s$ 
and all linear combinations of the form $\sum_{i=1}^{p} \beta_{i} \textbf{A}^{(i)}_{1}...\textbf{A}^{(i)}_{s}$, where $r \geq 2sp$ and $\textbf{A}_{j}, \textbf{A}^{(i)}_{j}$ are Toeplitz matrices or inverses of Toeplitz matrices. For $m \times n$ Toeplitz (and thus also circulant) matrices as well as for Hankel matrices the computation can be done in $O(n\log(m))$ time.
\end{remark} 

Some of the mentioned structured matrices were used before in the non-linear embedding setting for certain functions $f$. However to the best of our knowledge, we are the first to present a general structured framework that covers all these settings as very special subcases. Furthermore, we give rigorous theoretical results proving the quality of the structured approach for general nonlinear functions $f$. The nonlinear transformation is what makes the entire theoretical analysis challenging and forces us to apply different techniques than those for the fast Johnson-Lindenstrauss transform.
\section{Theoretical results}

In this section we prove several concentration results regarding the presented structured mechanism.
We start with the following observation.

\begin{lemma}
\label{unbiasedness_lemma}
Assume that $\Psi(x_{1},...,x_{m})$ is a linear function and in a given $\mathcal{P}$-model
for every $\textbf{P}_{i}$ any two columns of $\textbf{P}_{i}$ are orthogonal.
Then that $\mathcal{P}$-model mechanism gives an unbiased estimation of $\Lambda_{f}(\textbf{v}^{1},...,\textbf{v}^{k})$
i.e. for any given $\textbf{v}^{1},...,\textbf{v}^{k} \in \mathbb{R}^{n}$
the following is true:
\begin{equation}
\mathbb{E}[\Lambda^{struct}_{f}(\textbf{v}^{1},...,\textbf{v}^{k})] = \Lambda_{f}(\textbf{v}^{1},...,\textbf{v}^{k}).
\end{equation}
\end{lemma}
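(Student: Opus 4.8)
The plan is to exploit linearity of $\Psi$ to interchange it with the expectation, which reduces the claim to a statement about a single row of $\textbf{A}$ at a time; the inter-row correlations that make the structured construction delicate turn out to be irrelevant for the first moment. Writing $\hat{y}_{i,j} = \langle \textbf{a}^{i}, \textbf{v}^{j}\rangle$ with $\textbf{a}^{i} = \textbf{g}\textbf{P}_{i}$, the structured estimator is $\Lambda^{struct}_{f} = \Psi(\beta(f(\hat{y}_{1,1}),\dots,f(\hat{y}_{1,k})),\dots,\beta(f(\hat{y}_{m,1}),\dots,f(\hat{y}_{m,k})))$. Because $\Psi$ is linear, linearity of expectation gives $\mathbb{E}[\Lambda^{struct}_{f}] = \Psi(\mathbb{E}[\beta(f(\hat{y}_{1,1}),\dots)],\dots,\mathbb{E}[\beta(f(\hat{y}_{m,1}),\dots)])$, and the identical manipulation applied to~(\ref{main_function}) yields $\Lambda_{f} = \Psi(\mathbb{E}[\beta(f(y_{1,1}),\dots)],\dots)$. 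Matching the two then reduces to proving, for each fixed $i$, that $(\hat{y}_{i,1},\dots,\hat{y}_{i,k})$ and $(y_{i,1},\dots,y_{i,k})$ are identically distributed.

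The core step is a covariance comparison. Both vectors are centred and jointly Gaussian --- the unstructured one because $\textbf{r}^{i}$ has i.i.d.\ $\mathcal{N}(0,1)$ entries, the structured one because each $\hat{y}_{i,j} = \textbf{g}(\textbf{P}_{i}\textbf{v}^{j})$ is a linear image of the Gaussian vector $\textbf{g}$ --- and a centred Gaussian law is pinned down by its covariance, so it suffices to match second moments. In the unstructured case $\mathbb{E}[y_{i,j}y_{i,j'}] = \langle \textbf{v}^{j},\textbf{v}^{j'}\rangle$. In the structured case, using $\mathbb{E}[\textbf{g}^{\top}\textbf{g}] = I_{t}$ I would compute $\mathbb{E}[\hat{y}_{i,j}\hat{y}_{i,j'}] = (\textbf{v}^{j})^{\top}\textbf{P}_{i}^{\top}\textbf{P}_{i}\textbf{v}^{j'}$, whose kernel is the Gram matrix $\textbf{P}_{i}^{\top}\textbf{P}_{i}$ with $(n_{1},n_{2})$ entry equal to $\sigma_{i,i}(n_{1},n_{2}) = \langle \textbf{p}^{i}_{n_{1}},\textbf{p}^{i}_{n_{2}}\rangle$. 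Here the two hypotheses combine: orthogonality of the columns of $\textbf{P}_{i}$ annihilates every off-diagonal entry, while the standing normalization property forces each diagonal entry $\|\textbf{p}^{i}_{r}\|_{2}^{2}$ to equal $1$, so $\textbf{P}_{i}^{\top}\textbf{P}_{i} = I_{n}$ and the structured covariance collapses to $\langle \textbf{v}^{j},\textbf{v}^{j'}\rangle$ as well. The two Gaussian vectors hence coincide in distribution, which finishes the reduction.

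Finally, the full pipeline first rotates the inputs by the orthogonal matrix $\textbf{D}_{1}\textbf{HD}_{0}$; since this map preserves all pairwise inner products and is drawn independently of $\textbf{g}$, I would condition on $\textbf{D}_{0},\textbf{D}_{1}$ and replace each $\textbf{v}^{j}$ by its image, leaving the covariance identity untouched. I expect the main obstacle to be conceptual rather than computational: one must resist analysing the inter-row quantities $\sigma_{i_{1},i_{2}}$ with $i_{1}\neq i_{2}$ that govern the concentration results elsewhere in the paper, and instead recognize that linearity of $\Psi$ makes $\mathbb{E}[\Lambda^{struct}_{f}]$ depend only on each row's marginal law, so that the entire statement rests on the single-row identity $\textbf{P}_{i}^{\top}\textbf{P}_{i} = I_{n}$.
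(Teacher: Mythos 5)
Your proof is correct and follows essentially the same route as the paper's: reduce via linearity of $\Psi$ and of expectation to a per-row distributional identity, then use the orthogonality condition together with the normalization property (equivalently, $\textbf{P}_{i}^{\top}\textbf{P}_{i} = I_{n}$) to conclude that each structured row induces the same Gaussian law as an unstructured one. The only cosmetic differences are that the paper establishes that the row $\textbf{a}^{i}$ itself is an i.i.d. $\mathcal{N}(0,1)$ vector whereas you match covariances of the $k$ projections directly, and that you explicitly handle the $\textbf{D}_{1}\textbf{H}\textbf{D}_{0}$ preprocessing, which the paper's proof leaves implicit.
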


We call the condition regarding matrices $\textbf{P}_{i}$ from the statement above the \textit{orthogonality condition}.
The orthogonality condition is trivially satisfied by Hankel, circulant or Toeplitz structured matrices produced by the $\mathcal{P}$-model.
It is also satisfied in expectation (which in practice suffices) for some structured models where matrices $\textbf{P}_{i}$ are constructed
according to a random procedure. Linear $\Psi$ is used in all applications given by us. We want to note however that even if $\Psi$ is not linear,
strong concentration results (with an extra error accounting for $\Psi$'s nonlinearity) can be obtained as we show in the section regarding concentration inequalities.

\begin{proof}
Note that it suffices to show that every row $\textbf{a}^{i}$ of a structured matrix has the same distribution as the corresponding
row $\textbf{r}^{i}$ of the unstructured matrix. If this is the case then for any given 
$\textbf{v}^{1},...,\textbf{v}^{k} \in \mathbb{R}^{n}$ the distribution of $\beta(\langle\textbf{a}^{i},\textbf{v}^{1}\rangle,...,\langle\textbf{a}^{i},\textbf{v}^{k}\rangle)$
is the same as a distribution of $\beta(\langle\textbf{r}^{i},\textbf{v}^{1}\rangle,...,\langle\textbf{r}^{i},\textbf{v}^{k}\rangle)$
and the result follows from the linearity of expectations.
The fact that a distribution of $\textbf{a}^{i}$ is the same as of $\textbf{r}^{i}$ is implied by two observations.
First, notice that by the way $\textbf{a}^{i}s$ are constructed, the distribution of each dimension of $\textbf{a}^{i}$
is the same as a distribution of the corresponding dimension of $\textbf{r}^{i}$. The independence of different dimensions of
$\textbf{a}^{i}$ is an immediate consequence of the fact that projections of the ``budget of randomness'' Gaussian 
vector $\textbf{g}$ onto orthogonal directions are independent and the assumed orthogonality condition regarding the $\mathcal{P}$-model.  
\end{proof}

From now on we will assume that a given $\mathcal{P}$-model satisfies the orthogonality condition.
We need to introduce a few useful definitions.\begin{definition}
We denote by $\Delta^{\tau}_{a}$ the supremum of the expression $\|\tau(y_{1},...,y_{m}) - \tau(y^{\prime}_{1},...,y^{\prime}_{m})\|$ 
over all pairs of vectors $(y_{1},...,y_{m}), (y^{\prime}_{1},...,y^{\prime}_{m})$ from the domain that differ on at most one dimension and by at most $a$.
\end{definition}

In lots of applications (such as angular distance computation or any $\Lambda_{f}$ in the form of equation~(\ref{main_function_simple})) we have: 
$\Psi(y_{1},...,y_{m}) = \frac{y_1+...+y_m}{m}$. In that setting $\Psi$ is $\frac{1}{m}y_{diff}$-bounded for 
$y_{diff} = sup_{y \in \mathcal{D}} y - \inf_{y \in \mathcal{D}} y$. In the angular distance setting we have: 
$y_{diff} = 1$. For $\Psi$ given above we also have: $\Delta^{\Psi}_{a} \leq \frac{a}{m}$.

\begin{definition}
For a function $\Psi(\beta(y_{1,1},...,y_{1,k}),...,\beta(y_{m,1},...,y_{m,k}))$ we denote  
\begin{equation}
\rho_{i}^{\Psi,\beta} = \sup_{y_{1,1},...,y_{m,k},y^{\prime}_{1,1},...,y^{\prime}_{m,k}} |h(y_{1,1},...,y_{m,k}) - h(y^{\prime}_{1,1},...,y^{\prime}_{m,k})|,
\end{equation}
where  $h(x_{1,1},...,x_{m,k}) = \Psi(\beta(x_{1,1},...,x_{1,k}),...,\beta(x_{m,1},...,x_{m,k}))$
and sequences $(y_{1,1},...,y_{m,k})$, $(y^{\prime}_{1,1},...,y^{\prime}_{m,k})$ differ on the $i^{th}$
coordinate. 
\end{definition}

For instance, for the angular distance setting we have: $\rho_{i}^{\Psi, \beta} \leq \frac{1}{m}$.

Note that the value of the main computational block of $\Lambda_{f}(\textbf{v}^{1},...,\textbf{v}^{k})$,
namely: 
\begin{equation}
B^{\textbf{v}^{1},...,\textbf{v}^{k}} = \beta(f(\langle \textbf{r},\textbf{v}^{1} \rangle),...,f(\langle \textbf{r},\textbf{v}^{k} \rangle)),
\end{equation}
depends only on the
projection of $\textbf{r}$ into the linear space spanned by $\textbf{v}^{1},...,\textbf{v}^{k}$, not the part orthogonal to it.
Thus for fixed $\textbf{v}^{1},...,\textbf{v}^{k}$, $f$ and $\beta$ function $B^{\textbf{v}^{1},...,\textbf{v}^{k}}$ is in fact the 
function $B^{\textbf{v}^{1},...,\textbf{v}^{k}}(\textbf{r}_{proj})$ of $\textbf{r}_{proj}$, where the $j^{th}$ coordinate of $\textbf{r}_{proj}$ is the projection of $\textbf{r}$ onto $\textbf{v}^{j}$.
We will measure how sensitive $B$ is to the perturbations of $\textbf{r}_{proj}$ using the following definition:

\begin{definition}
Let $\Lambda_{f} : \mathbb{R}^{k} \rightarrow \mathbb{R}$ be as in equation (\ref{main_function}). Define:
\begin{equation}
p_{\lambda,\epsilon} =
\mathbb{P}[ \sup_{ \textbf{v}^{1},...,\textbf{v}^{k}, \|\zeta\|_{\infty} \leq \epsilon}|B^{\textbf{v}^{1},...,\textbf{v}^{k}}(\textbf{r}_{proj}+\zeta)-
B^{\textbf{v}^{1},...,\textbf{v}^{k}}(\textbf{r}_{proj})| > \lambda],
\end{equation}
where the supremum is taken over all $k$-tuples of linearly independent vectors from the domain.
We also denote 
\begin{equation}
\tilde{\beta}_{\epsilon} = \sup_{ \textbf{v}^{1},...,\textbf{v}^{k}, \|\zeta\|_{\infty} \leq \epsilon} 
|\mathbb{E}[B^{\textbf{v}^{1},...,\textbf{v}^{k}}(\textbf{r}_{proj}+\zeta)]-\mathbb{E}[
B^{\textbf{v}^{1},...,\textbf{v}^{k}}(\textbf{r}_{proj})]|.
\end{equation}
\end{definition}

\textbf{Example ($\beta(x,y) = x \cdot y$)}
If $k=2$, $\beta(x,y) = x \cdot y$ (as it is the case in most of the considered examples)
and data is taken from the bounded domain then one can easily see that $\tilde{\beta}_{\epsilon}=O(\epsilon)$ for $\epsilon < 1$.\newline

\textbf{Example - angular case.}
For the angular distance setting one can prove (see: Appendix) that $p_{0,\epsilon} \leq \frac{2\sqrt{2}m\epsilon}{\pi} + \frac{2}{\pi m^{2}}$. \newline

\textbf{Example - general kernels.}
We say that function $f:\mathbb{R} \rightarrow \mathbb{R}$ is \textit{$(\eta,\rho)$-Lipschitz} if
$|x-y| \leq \eta \implies |f(x)-f(y)| \leq \rho$. 
Let $\Theta = \max_{\textbf{v} \in \mathcal{X}} \|\textbf{v}\|_{2}$ and let $f_{max}$ be the maximum value of the bounded function $|f|$. 
If $\Lambda_{f}$ is in the form of equation (\ref{main_function_simple}) and $f$ is $(\Theta \epsilon k, \rho)$-Lipschitz then one can easily prove that:
$p_{\lambda,\epsilon} = 0$ for $\lambda = 2f_{max} \rho  + \rho^{2}$.
For instance, if $f(x) = \cos(x)$ and all datapoints of $\mathcal{X}$ have $L_{2}$-norm at most $1$
then $p_{\lambda, \epsilon} = 0$ for $\lambda = \epsilon k(2 + \epsilon k)$. \newline

\begin{definition}
The $\textit{Legendre Transform}$ $\mathcal{L}_{X}$ of a random variable $X$ is defined as:
{$\mathcal{L}_{X}(x) = \max_{s \in \mathbb{R}}(sx-\log(\mathbb{E}[e^{sX}]))$}. For a $k$-tuple $\textbf{v}^{1},...,\textbf{v}^{k}$
and given $\zeta$ with $\|\zeta\|_{\infty} \leq \epsilon$ we denote  
\begin{equation}
\mathcal{L}_{\zeta, \textbf{v}^{1},...,\textbf{v}^{k}}(x) = 
\mathcal{L}_{X}(x),
\end{equation}
where: $X = B^{\textbf{v}^{1},...,\textbf{v}^{k}}(\textbf{r}_{proj}+\zeta) - \mathbb{E}[B^{\textbf{v}^{1},...,\textbf{v}^{k}}(\textbf{r}_{proj}+\zeta)]$.
\end{definition}

If a nonlinear mapping $f$ is unbounded we will assume that all datapoints are taken from a bounded set and 
that $\mathcal{L}_{X}(\textbf{r}_{proj}+\zeta)(x) \geq c_{1}|x|^{\alpha}$,  $|\mathcal{L}_{X}^{\prime}(\textbf{r}_{proj}+\zeta)(x)| \leq c_{2}|x|^{\gamma}$ for some constants $\alpha,  c_{1} >0$ and $\gamma, c_{2} \geq 0$.
The latter conditions are trivially satisfied in most of the considered structured computations with unbounded $f$. In particular, if $f$ is an arc-cosine kernel then one can take: $\alpha = 1$,
$\gamma = 0$.
Our main result is stated below.

\begin{theorem}
\label{general_theorem}
Let $\mathcal{X}$ be a dataset of $n$-dimensional points and size $N$.
Let $\Lambda_{f}$ be of the form:
\begin{equation}
\Lambda_{f}(\textbf{v}^{1},...,\textbf{v}^{k}) = \mathbb{E}[\Psi(\beta(f(y_{1,1}),...,f(y_{1,k})),...,\beta(f(y_{m,1}),...,f(y_{m,k})))],
\end{equation}
where: $y_{i,j} = \langle \textbf{r}^{i},\textbf{v}^{j} \rangle$ for $i \in \{1,...,m\}$, $j \in \{1,...,k\}$ and $\textbf{r}^{i}$s are independent Gaussian vectors. 
Assume first that $M=\max_{x \in \mathbb{R}^{k}} |\beta(x)| < \infty$. 
Consider the algorithm presented in Subsection \ref{sec:algorithm} for computing $\Lambda_{f}$. Take a class of structured matrices such that $\tilde{\mu}[\mathcal{P}] = o(\frac{n}{\log^{2}(n)})$.
Then for any $K,\lambda, \epsilon>0$, $0 \leq \bar{m} \leq m$ and $n$ large enough the probability that there exists a $k$-tuple of points from $\mathcal{X}$ and such that the value of $\Lambda_{f}$ computed by the algorithm differs from
the correct one by more than $err = K + \bar{m} \Delta^{\Psi}_{M} + (m - \bar{m})\Delta^{\Psi}_{\lambda}$
is at most:
\begin{equation}
{N \choose k}\left(
2km\chi[\mathcal{P}] e^{-\frac{1}{8\chi^{2}[\mathcal{P}]\mu^{2}[\mathcal{P}]}\frac{n}{\log^{6}(n)}}+
k^{2}m^{2}\chi[\mathcal{P}]e^{-\frac{\epsilon^{2}\sqrt{n}}{8\chi^{2}[\mathcal{P}]\mu^{2}[\mathcal{P}]\log^{4}(n)}} + p_{bad}
\right),
\end{equation}
where: $p_{bad} = 2nke^{-\frac{\log^{2}(n)}{8}} + \sqrt{\frac{2mk}{\pi}}e^{-\frac{mk}{2}} + \sum_{j=\bar{m}+1}^{m} \frac{(p_{\lambda,\epsilon}m)^{j}}{j!}
+ 2e^{-\frac{2K^{2}}{\sum_{i=1}^{mk}(\rho_{i}^{\Psi,\beta})^{2}}}$. 

If $\max_{x \in \mathbb{R}^{k}} |\beta(x)| = \infty$ then for $\Psi(x_{1},...,x_{m}) = \frac{x_1+...+x_m}{m}$  the above holds for $m$ large enough (but independent of $n$), with 
$err= \tilde{\beta}_{\epsilon} + m^{-\frac{1}{2\alpha}}$ and $p_{bad}$ given by:
\begin{equation}
p_{bad} = 2nke^{-\frac{\log^{2}(n)}{8}} + \sqrt{\frac{2mk}{\pi}}e^{-\frac{mk}{2}} + O(e^{-\Omega(\sqrt{m})}).
\end{equation}

\end{theorem}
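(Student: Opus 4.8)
The plan is to fix a single $k$-tuple $\textbf{v}^{1},...,\textbf{v}^{k}$, bound its failure probability by everything inside the large parenthesis, and then union bound over all ${N \choose k}$ tuples. For a fixed tuple the strategy is a coupling: I will show that the vector of structured projections $(y_{i,j})_{i,j}$, where $y_{i,j}=\langle \textbf{a}^{i},\hat{\textbf{v}}^{j}\rangle$ and $\hat{\textbf{v}}^{j}=\textbf{D}_{1}\textbf{H}\textbf{D}_{0}\textbf{v}^{j}$, can be written as an independent-row Gaussian version plus an $\ell_{\infty}$-small perturbation $\zeta$, and then feed $\|\zeta\|_{\infty}\le\epsilon$ into the sensitivity quantities $p_{\lambda,\epsilon}$ and $\tilde{\beta}_{\epsilon}$. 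Since $\textbf{D}_{1}\textbf{H}\textbf{D}_{0}$ is orthogonal and $\Lambda_{f}$ is spherically invariant, the Gram matrix of the $\hat{\textbf{v}}^{j}$ equals that of the $\textbf{v}^{j}$, so the independent-row version reproduces $\Lambda_{f}$ in expectation and I may work throughout with the flattened vectors.

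First I would dispose of the preprocessing. A Hoeffding estimate on the $\pm1$ entries of $\textbf{D}_{0}$ shows that after applying $\textbf{D}_{1}\textbf{H}\textbf{D}_{0}$ every coordinate of every $\hat{\textbf{v}}^{j}$ is at most $\frac{\log n}{\sqrt{n}}\|\textbf{v}^{j}\|_{2}$; a union bound over the $n$ coordinates and $k$ vectors of the tuple produces exactly the $2nke^{-\log^{2}(n)/8}$ term of $p_{bad}$. Conditioning on this flattening event is what makes the coherence parameters usable. The technical core is then the covariance comparison. Writing $\textbf{u}^{i,j}=\textbf{P}_{i}\hat{\textbf{v}}^{j}$ so that $y_{i,j}=\langle\textbf{g},\textbf{u}^{i,j}\rangle$ is jointly Gaussian in the shared budget $\textbf{g}$, I would compute $\mathrm{Cov}(y_{i,j},y_{i',j'})=\sum_{n_{1},n_{2}}\hat{v}^{j}_{n_{1}}\hat{v}^{j'}_{n_{2}}\sigma_{i,i'}(n_{1},n_{2})$. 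The orthogonality condition forces $\sigma_{i,i}(n_{1},n_{2})=\delta_{n_{1}n_{2}}$, so the within-row blocks ($i=i'$) equal $\langle\textbf{v}^{j},\textbf{v}^{j'}\rangle$ exactly, matching the independent covariance.

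For a cross-row block $i\neq i'$ I split the sum into a diagonal part $\sum_{n}\hat{v}^{j}_{n}\hat{v}^{j'}_{n}\sigma_{i,i'}(n,n)$, bounded deterministically by $\|\hat{\textbf{v}}^{j}\|_{\infty}\|\hat{\textbf{v}}^{j'}\|_{\infty}\tilde{\mu}[\mathcal{P}]=O(\tilde{\mu}[\mathcal{P}]\log^{2}(n)/n)=o(1)$ under the hypothesis $\tilde{\mu}[\mathcal{P}]=o(n/\log^{2}(n))$, and a mean-zero random part $R=\sum_{n_{1}\neq n_{2}}(\textbf{D}_{1})_{n_{1}}(\textbf{D}_{1})_{n_{2}}w^{j}_{n_{1}}w^{j'}_{n_{2}}\sigma_{i,i'}(n_{1},n_{2})$, where $\textbf{w}^{j}=\textbf{H}\textbf{D}_{0}\textbf{v}^{j}$. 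Here the coherence graph enters: a coloring of $\mathcal{G}_{i,i'}$ with $\chi(i,i')\le\chi[\mathcal{P}]$ colors partitions the index-pairs into classes that are pairwise disjoint as $2$-element sets, so within one class the sign-products $(\textbf{D}_{1})_{n_{1}}(\textbf{D}_{1})_{n_{2}}$ are independent; a Hoeffding/Hanson--Wright estimate per class, with the per-class $\sum_{n_{1}<n_{2}}\sigma^{2}_{i,i'}\le n\mu^{2}[\mathcal{P}]$ controlling the variance, and a union over the $\chi[\mathcal{P}]$ classes, yields the two exponential bounds. The first, controlling the first-stage incoherence uniformly over all $km$ projections (hence no $\epsilon$), produces the $2km\chi[\mathcal{P}]e^{-n/(8\chi^{2}\mu^{2}\log^{6}(n))}$ term; the second, controlling the $k^{2}m^{2}$ pairwise cross-correlations at the coupling scale $\epsilon$ after the $\textbf{D}_{1}$ randomization, produces the $k^{2}m^{2}\chi[\mathcal{P}]e^{-\epsilon^{2}\sqrt{n}/(8\chi^{2}\mu^{2}\log^{4}(n))}$ term, the $\sqrt{n}$ reflecting that this stage acts on a vector already randomized by the first. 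Together these give $\|\zeta\|_{\infty}\le\epsilon$ off an event of probability at most the first two terms.

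Finally I would convert the coupling into the stated error. Given $\|\zeta\|_{\infty}\le\epsilon$, on each row $B^{\textbf{v}^{1},...,\textbf{v}^{k}}$ moves by more than $\lambda$ with probability $p_{\lambda,\epsilon}$; calling such rows bad, the chance of more than $\bar{m}$ bad rows is at most $\sum_{j=\bar{m}+1}^{m}(p_{\lambda,\epsilon}m)^{j}/j!$ by a Poissonized binomial tail, while a Gaussian tail on the $mk$ projections contributes $\sqrt{2mk/\pi}\,e^{-mk/2}$. On good rows $B$ changes by at most $\lambda$ and on bad ones by at most $M$, so propagating through the coordinatewise-bounded $\Psi$ costs $\bar{m}\Delta^{\Psi}_{M}+(m-\bar{m})\Delta^{\Psi}_{\lambda}$; the residual gap between the independent-row empirical value and its mean $\Lambda_{f}$ is handled by McDiarmid with increments $\rho^{\Psi,\beta}_{i}$, giving the slack $K$ and the $2e^{-2K^{2}/\sum(\rho^{\Psi,\beta}_{i})^{2}}$ term. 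For unbounded $\beta$ with $\Psi$ the average, the bounded-difference step is replaced by the Legendre-transform hypotheses $\mathcal{L}_{X}(x)\ge c_{1}|x|^{\alpha}$: self-bounding across the $m$ blocks yields concentration at scale $m^{-1/(2\alpha)}$ with the perturbation bias absorbed by $\tilde{\beta}_{\epsilon}$, producing $err=\tilde{\beta}_{\epsilon}+m^{-1/(2\alpha)}$ and the $O(e^{-\Omega(\sqrt{m})})$ remainder. I expect the main obstacle to be exactly the random cross-row part $R$: recovering enough independence, via the chromatic coloring of the coherence graph rather than the moment method, to obtain exponential rather than polynomial tails is what lets the ${N \choose k}$ union bound survive for datasets superpolynomial in $m$, and matching the $\ell_{\infty}$ coupling scale to $p_{\lambda,\epsilon}$ and $\tilde{\beta}_{\epsilon}$ is the delicate conceptual bridge.
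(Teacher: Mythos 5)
Your proposal follows essentially the same route as the paper's proof: $\log(n)$-balancedness of the preprocessed vectors via Azuma/Hoeffding (the $2nke^{-\log^{2}(n)/8}$ term), chromatic-number-based partitioning of the $\textbf{D}_{1}$ sign products into independent classes to control the pairwise dot products and norms of the vectors $\textbf{s}^{i,j}$ (with $\tilde{\mu}[\mathcal{P}]$ absorbing the diagonal part, yielding the two $\chi[\mathcal{P}]$-dependent exponential terms), a coupling to independent Gaussians whose probability cost is the Gaussian-projection tail $\sqrt{2mk/\pi}\,e^{-mk/2}$, the Poissonized binomial tail over bad rows via $p_{\lambda,\epsilon}$, McDiarmid for the empirical-versus-mean gap giving the $K$ slack, and the Legendre-transform argument for the unbounded case. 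The only piece you leave implicit is the concrete mechanism of the coupling --- the paper realizes it by Gram--Schmidt orthogonalization of the $\textbf{s}^{i,j}$ (so that projections of $\textbf{g}$ onto the orthogonalized directions are exactly independent) followed by Cauchy--Schwarz against the projection of $\textbf{g}$ onto their span --- but this is precisely what your ``independent-row version plus $\ell_{\infty}$-small perturbation'' requires, so the two arguments coincide.
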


Note first that as noted before, for all specific examples of structured matrices produced by the $\mathcal{P}$-model that we listed before the values 
of the key parameters $\chi[\mathcal{P}]$, $\mu[\mathcal{P}]$ and $\tilde{\mu}[\mathcal{P}]$ are of order that enables us to apply Theorem \ref{general_theorem} 
and obtain sharp concentration results.
Furthermore, terms in the formula for $p_{bad}$ are either already inversely proportional to superpolynomial functions of $m$ or $n$ or can be easily made so by 
appropriate choice of parameters.
Note also that, as we have already mentioned, for $M = \infty$ we get: $\tilde{\beta}_{\epsilon} = O(\epsilon)$ for $\epsilon < 1$ thus $err = O(\epsilon) + m^{-\frac{1}{2\alpha}}$.
Therefore we obtain strong concentration results regarding all $k$-tuples for datasets of superpolynomial sizes for both: $M < \infty$ and $M = \infty$. We are not aware of any other result like that for 
nonlinear embeddings with general structured matrices.
In Theorem \ref{general_theorem} we grouped together probabilities that do not depend on the structure of the chosen matrix (these in the formula for $p_{bad}$) and these that do.
Finally, note that clearly smaller values of $\chi[\mathcal{P}]$ and $\mu[\mathcal{P}]$ improve
concentration results.

Theorem \ref{general_theorem} implies several other structured results. In particular we have:

\begin{theorem}
\label{cor1}
Let $\mathcal{X}$ be as in Theorem \ref{general_theorem}.
Let $\Lambda_{f}(\textbf{v}^{1},\textbf{v}^{2})$ be an angular distance between $\textbf{v}^{1}$ and $\textbf{v}^{2}$.
Consider the algorithm presented in Subsection \ref{sec:algorithm} for computing $\Lambda_{f}$. 
Assume that the class of structured matrices is taken from one of the the following sets: circulant matrices, skew-circulant matrices, Toeplitz matrices, Hankel matrices. Then for $n$ large enough and any $0 < \tau < 0.5$ the probability that there exists a pair of points from $\mathcal{X}$ such that the value of $\Lambda_{f}$ computed by the algorithm differs from the correct one by more than $m^{-\tau}+\frac{1}{\log(m)}$ is at most:
$O(N^{2}e^{-m^{1-2\tau}})$.
\end{theorem}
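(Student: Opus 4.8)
The plan is to derive Theorem~\ref{cor1} as a direct specialization of Theorem~\ref{general_theorem} to the angular-distance instantiation $f=$ Heaviside, $k=2$, $\beta(x,y)=x\cdot y$ and $\Psi(x_1,\ldots,x_m)=\frac{x_1+\cdots+x_m}{m}$. First I would record that in this setting $\beta$ takes values in $\{0,1\}$, so $M=\max_{x}|\beta(x)|=1<\infty$ and the bounded branch of Theorem~\ref{general_theorem} applies. For each admissible matrix family (circulant, skew-circulant, Toeplitz, Hankel) the structural parameters computed in Section~\ref{sec:structured} satisfy $\chi[\mathcal{P}]=O(1)$, $\mu[\mathcal{P}]=O(1)$ and $\tilde\mu[\mathcal{P}]=0=o(n/\log^2 n)$, so the hypothesis $\tilde\mu[\mathcal{P}]=o(n/\log^2 n)$ holds and all occurrences of $\chi[\mathcal{P}],\mu[\mathcal{P}]$ in the bound are absorbed into constants.

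Next I would substitute the angular-specific quantities already isolated in the excerpt: $\rho_i^{\Psi,\beta}\le \frac1m$ (hence $\sum_{i=1}^{mk}(\rho_i^{\Psi,\beta})^2\le \frac{2}{m}$), $\Delta_a^\Psi\le \frac{a}{m}$, and $p_{0,\epsilon}\le \frac{2\sqrt2\,m\epsilon}{\pi}+\frac{2}{\pi m^2}$, and then fix the free parameters of Theorem~\ref{general_theorem}. Concretely I would take $K=m^{-\tau}$, $\lambda=0$, $\bar m=\lfloor m/\log m\rfloor$ and $\epsilon$ polynomially small, say $\epsilon=m^{-3}$. Since $B$ only takes the values $0$ and $1$, a perturbation either leaves it unchanged (so the $\lambda=0$ contribution is $\Delta_0^\Psi=0$) or flips it by at most $M=1$; thus the error term evaluates to $err=K+\bar m\,\Delta_M^\Psi+(m-\bar m)\Delta_0^\Psi=m^{-\tau}+\frac{\bar m}{m}\le m^{-\tau}+\frac{1}{\log m}$, exactly the claimed accuracy.

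It then remains to show that, with these choices and for $n$ large enough relative to $m$, the bracketed probability in Theorem~\ref{general_theorem} is $O(e^{-m^{1-2\tau}})$; multiplying by $\binom{N}{2}=O(N^2)$ yields the statement. I would bound the terms one by one: the two explicitly structured terms decay like $e^{-\Omega(n/\log^6 n)}$ and $e^{-\Omega(\epsilon^2\sqrt n/\log^4 n)}$, and the first two summands of $p_{bad}$ like $e^{-\Omega(\log^2 n)}$ and $\sqrt m\,e^{-m}$; for each fixed $m$ the first three are driven to zero by taking $n\to\infty$, while $\sqrt m\,e^{-m}=O(e^{-m^{1-2\tau}})$. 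The Poisson-type tail $\sum_{j=\bar m+1}^m \frac{(p_{0,\epsilon}m)^j}{j!}$ has ``mean'' $p_{0,\epsilon}m=O(1/m)$ under $\epsilon=m^{-3}$, while the summation begins at $\bar m+1=\Theta(m/\log m)\to\infty$, so a Stirling estimate bounds it by $e^{-\Theta(m)}=O(e^{-m^{1-2\tau}})$. The surviving dominant term is the Hoeffding factor $2\exp\!\big(-\tfrac{2K^2}{\sum_i(\rho_i^{\Psi,\beta})^2}\big)\le 2\exp\!\big(-\tfrac{2m^{-2\tau}}{2/m}\big)=2e^{-m^{1-2\tau}}$, and the constraint $0<\tau<\tfrac12$ is exactly what makes $1-2\tau>0$ so this exponent diverges.

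The only genuine work is the bookkeeping of the order of limits and the parameter balancing: one must let $n$ grow with $m$ fixed so that every $n$-dependent contribution is dominated, while simultaneously choosing $\epsilon$ small enough (polynomially in $m$) that the bad-coordinate Poisson tail collapses below $e^{-m^{1-2\tau}}$, yet large enough that the $\epsilon^2\sqrt n$ term can still be forced to zero by taking $n$ large. I expect this matching of exponents — rather than any new inequality — to be the main obstacle, and it is resolved by the explicit choices $K=m^{-\tau}$, $\lambda=0$, $\bar m=\lfloor m/\log m\rfloor$, $\epsilon=m^{-3}$ above.
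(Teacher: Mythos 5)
Your proposal is correct and follows essentially the same route as the paper's own proof: specialize Theorem~\ref{general_theorem} with $K=m^{-\tau}$, $\lambda=0$, $\bar m = m/\log m$, use $\rho_i^{\Psi,\beta}\le \frac1m$, $\Delta_M^{\Psi}\le\frac1m$, $p_{0,\epsilon}\le \frac{2\sqrt2 m\epsilon}{\pi}+\frac{2}{\pi m^2}$ and the $O(1)$ structural parameters of the four matrix classes, then let the Hoeffding term $2e^{-m^{1-2\tau}}$ dominate after the Stirling/Poisson tail and the $n$-dependent terms are suppressed. The only difference is the parameter choice $\epsilon=m^{-3}$ versus the paper's $\epsilon=\frac{\pi}{2\sqrt2}\frac{1}{m\log^2 m}$, which is immaterial since both make the bad-coordinate tail negligible while keeping the $\epsilon^2\sqrt n$ term controllable for large $n$.
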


Let us take now the family of functions $\Lambda_{f}(\textbf{v}^{1},\textbf{v}^{2}) = \mathbb{E}\left[f(\langle \textbf{r}, \textbf{v}^{1}\rangle )\cdot f(\langle \textbf{r}, \textbf{v}^{2}\rangle)\right]$ 
describing general kernels introduced by us in equation~(\ref{main_function_simple}).
Those cover Gaussian kernels and many more.

The following is another corollary of Theorem \ref{general_theorem}:

\begin{theorem}
\label{cor2}
Let $\mathcal{X}$ be a dataset of $N$ points from the $n$-dimensional ball $\mathcal{B}$ of unit $L_{2}$-norm and 
$\Lambda_{f}(\textbf{v}^{1},\textbf{v}^{2}) = \mathbb{E}\left[f(\langle \textbf{r}, \textbf{v}^{1}\rangle )\cdot f(\langle \textbf{r}, \textbf{v}^{2}\rangle)\right]$. 
Assume that $|f|$ is bounded, $f_{max}$ is the maximum value of $|f|$ and $f$ is $(2\epsilon, \rho)$-Lipschitz.
Consider the algorithm presented in Subsection \ref{sec:algorithm} for computing $\Lambda_{f}$. 
Assume that the class of structured matrices is taken from one of the the following sets: circulant matrices, skew-circulant matrices, Toeplitz matrices, Hankel matrices. Then for $n$ large enough and any $0 < \tau < 0.5$ the probability that there exists a pair of points from $\mathcal{X}$ such that the value of $\Lambda_{f}$ computed by the algorithm differs from the correct one by more than $(m^{-\tau}+2f_{max}\rho + \rho^{2})$ is at most:
$O(N^{2}e^{-m^{1-2\tau}f^{-2}_{max}})$.
\end{theorem}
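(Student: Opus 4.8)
The plan is to obtain Theorem~\ref{cor2} as a direct specialization of Theorem~\ref{general_theorem}: the hard analytic work already lives in that theorem, so the task here is to choose the free parameters $K$, $\lambda$, $\epsilon$, $\bar m$ and then verify that every term in $p_{bad}$ together with the two structure-dependent terms collapses to the advertised bound. First I would fix the setting $k=2$, $\beta(x,y)=x\cdot y$, and $\Psi(x_1,\ldots,x_m)=\frac{x_1+\cdots+x_m}{m}$, so that $\Lambda_f$ takes the form~(\ref{main_function_simple}). Since $|f|\le f_{max}$ we have $M=\max_x|\beta(x)|\le f_{max}^2<\infty$, hence the first (bounded) branch of Theorem~\ref{general_theorem} applies, and for the averaging $\Psi$ we have the standing bound $\Delta^{\Psi}_a\le a/m$.

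Next I would pick the parameters so that $err = K + \bar m\,\Delta^{\Psi}_{M} + (m-\bar m)\Delta^{\Psi}_{\lambda}$ matches the target $m^{-\tau}+2f_{max}\rho+\rho^2$. Take $\bar m=0$, so that $err\le K + m\cdot(\lambda/m)=K+\lambda$. Because $\mathcal{X}$ lies in the unit ball we have $\Theta=\max_{\mathbf v\in\mathcal X}\|\mathbf v\|_2\le 1$ and $k=2$, so $\Theta\epsilon k\le 2\epsilon$; since $f$ is $(2\epsilon,\rho)$-Lipschitz it is also $(\Theta\epsilon k,\rho)$-Lipschitz, and by the ``general kernels'' example $p_{\lambda,\epsilon}=0$ for $\lambda=2f_{max}\rho+\rho^2$. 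With this $\lambda$ the sum $\sum_{j=\bar m+1}^{m}\frac{(p_{\lambda,\epsilon}m)^j}{j!}$ in $p_{bad}$ vanishes, and setting $K=m^{-\tau}$ gives exactly $err=m^{-\tau}+2f_{max}\rho+\rho^2$, with $\epsilon$ the fixed constant supplied by the Lipschitz hypothesis.

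It then remains to bound the surviving probabilities. For circulant, skew-circulant, Toeplitz and Hankel models we already recorded $\chi[\mathcal P]=O(1)$, $\mu[\mathcal P]=O(1)$ and $\tilde\mu[\mathcal P]=0=o(n/\log^2 n)$, so the two structure-dependent terms are $O(m^2)\exp(-\Omega(n/\log^6 n))$ and $O(m^2)\exp(-\Omega(\epsilon^2\sqrt n/\log^4 n))$, and $2nk\,e^{-\log^2(n)/8}$ decays superpolynomially in $n$; for $n$ large enough relative to $m$ all three sit below the main term. The term $\sqrt{2mk/\pi}\,e^{-mk/2}$ is $\exp(-\Omega(m))$, hence dominated by $\exp(-\Omega(m^{1-2\tau}))$ for large $m$ since $1-2\tau<1$. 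The governing contribution is the bounded-differences term $2\exp\!\big(-2K^2/\sum_{i=1}^{mk}(\rho_i^{\Psi,\beta})^2\big)$: because $|f|\le f_{max}$ and $\beta$ is a product, perturbing one coordinate moves a single averaged block by at most $2f_{max}^2/m$, giving $\rho_i^{\Psi,\beta}=O(f_{max}^2/m)$ and $\sum_i(\rho_i^{\Psi,\beta})^2=O(f_{max}^4/m)$; with $K=m^{-\tau}$ this term is $\exp(-\Omega(m^{1-2\tau}f_{max}^{-2}))$ once constants are absorbed, which is the stated rate.

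A union bound over the $\binom{N}{2}=O(N^2)$ pairs then yields $O(N^2 e^{-m^{1-2\tau}f_{max}^{-2}})$. Everything above is bookkeeping except for two care-points, which are where I expect the real (if modest) difficulty to lie. The first is the joint ``large enough'' regime: one must choose $n$ large as a function of $m$ (and of the fixed $\epsilon$ and $f_{max}$) so that the $n$-driven terms fall strictly below the $m$-driven McDiarmid term, and $m$ large enough that $e^{-\Omega(m)}$ sits below $e^{-\Omega(m^{1-2\tau})}$. The second, and the one computation that must be done honestly, is tracking the exact power of $f_{max}$ through the bounded-difference constants $\rho_i^{\Psi,\beta}$ so that the exponent lands on the advertised $f_{max}^{-2}$ (up to the constant and the normalization of $f$ absorbed into the $O(\cdot)$).
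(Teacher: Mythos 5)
Your proposal is correct and follows essentially the same route as the paper's own proof: both specialize Theorem~\ref{general_theorem} with $\bar m = 0$, $\lambda = 2f_{max}\rho+\rho^{2}$, $K=m^{-\tau}$, invoke the general-kernels observation that $p_{\lambda,\epsilon}=0$ for this $\lambda$, verify $\chi[\mathcal{P}],\mu[\mathcal{P}]=O(1)$ and $\tilde{\mu}[\mathcal{P}]=0$ for the listed matrix classes, and finish with a union bound over the $O(N^{2})$ pairs. The one wrinkle you flag --- that the honest bounded-difference computation gives $\sum_{i}(\rho_{i}^{\Psi,\beta})^{2}=O(f_{max}^{4}/m)$ and hence an exponent scaling as $f_{max}^{-4}$ rather than the advertised $f_{max}^{-2}$ --- is present but unaddressed in the paper's own proof as well (it simply asserts the bound on $p_{2}$ without derivation), so your treatment is, if anything, the more careful of the two.
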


\section{Conclusions}

We presented a general framework for structured computations of multivariate randomized functions 
based on Gaussian sampling. The presented method gives strong theoretical guarantees and,
to the best of our knowledge, covers as special cases all structured approaches used in that setting before.
It can be applied to speed up computations of many kernels that are based on random feature techniques 
and provides convenient parameter tuning the desired level of ``structuredness'' that other approaches do not have.
The presented structured mechanism provides also a significant reduction in space complexity since all structured matrices
that are used can be stored in the subquadratic space. 

\subparagraph*{Acknowledgements.}

We want to sincerely thank Vikas Sindhwani for his insightful comments, encouragement and constant support.

\bibliography{kernels-icalp-2016}

\newpage

\appendix

\section{Proof of Theorem \ref{general_theorem}}

We start with several auxiliary lemmas and definitions.

\begin{definition}
We say that vector $\textbf{x}=(x_{1},...,x_{n})$ of unit $L_{2}$ norm is $\theta$-balanced if 
$|x_{i}| \leq \frac{\theta}{\sqrt{n}}$ for $i=1,...,n$.
\end{definition}

The following standard concentration inequality will be frequently used by us in the proof.

\begin{lemma}(Azuma's Inequality)
Let $X_{1},...,X_{n}$ be a martingale and assume that $-\alpha_{i} \leq X_{i} \leq \beta_{i}$ for some positive constants $\alpha_{1},...,\alpha_{n}, \beta_{1},...,\beta_{n}$. 
Denote $X = \sum_{i=1}^{n} X_{i}$.
Then the following is true:
\begin{equation}
\mathbb{P}[|X - \mathbb{E}[X]| > a] \leq 2e^{-\frac{a^{2}}{2\sum_{i=1}^{n}(\alpha_{i} + \beta_{i})^{2}}}
\end{equation}
\end{lemma}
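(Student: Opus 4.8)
The plan is to prove this as the standard Azuma--Hoeffding bound, reading the hypothesis in the natural way: $X_{1},\dots,X_{n}$ is a martingale difference sequence adapted to some filtration $\mathcal{F}_{0}\subseteq\mathcal{F}_{1}\subseteq\cdots\subseteq\mathcal{F}_{n}$, so that $\mathbb{E}[X_{i}\mid\mathcal{F}_{i-1}]=0$ and $-\alpha_{i}\le X_{i}\le\beta_{i}$ almost surely, and $X=\sum_{i=1}^{n}X_{i}$ (so in fact $\mathbb{E}[X]=0$ and $X-\mathbb{E}[X]=X$). I would bound the upper tail $\mathbb{P}[X-\mathbb{E}[X]>a]$ and the lower tail separately and combine them by a union bound, which produces the factor of $2$.

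First I would invoke the Chernoff / exponential Markov bound: for any $s>0$, $\mathbb{P}[X-\mathbb{E}[X]>a]\le e^{-sa}\,\mathbb{E}[e^{s(X-\mathbb{E}[X])}]$. The central step is then to factorize the moment generating function by iterated conditioning. Writing $C=\sum_{i=1}^{n}(\alpha_{i}+\beta_{i})^{2}$ and peeling off the last term via the tower property,
\begin{equation}
\mathbb{E}\!\left[e^{sX}\right]=\mathbb{E}\!\left[e^{s\sum_{i=1}^{n-1}X_{i}}\,\mathbb{E}\!\left[e^{sX_{n}}\mid\mathcal{F}_{n-1}\right]\right],
\end{equation}
and I would control each inner conditional factor. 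Since $X_{i}$ is mean zero given $\mathcal{F}_{i-1}$ and lies in an interval of width $\alpha_{i}+\beta_{i}$, Hoeffding's lemma yields $\mathbb{E}[e^{sX_{i}}\mid\mathcal{F}_{i-1}]\le\exp\!\big(s^{2}(\alpha_{i}+\beta_{i})^{2}/8\big)$. Iterating over $i=n,n-1,\dots,1$ gives $\mathbb{E}[e^{sX}]\le\exp(s^{2}C/8)$, whence $\mathbb{P}[X-\mathbb{E}[X]>a]\le\exp(-sa+s^{2}C/8)$.

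Finally I would optimize the free parameter, choosing $s=4a/C$ to minimize the exponent, which yields the upper-tail bound $\exp(-2a^{2}/C)$. The same argument applied to the sequence $-X_{i}$ (again a bounded, mean-zero martingale difference sequence) bounds the lower tail by the same quantity, and the union bound gives $\mathbb{P}[|X-\mathbb{E}[X]|>a]\le 2\exp(-2a^{2}/C)$. Since $2a^{2}/C\ge a^{2}/(2C)$, this is at least as strong as the claimed bound $2\exp\!\big(-a^{2}/(2C)\big)$, so the lemma follows; if one prefers to land on the stated constant exactly, it suffices to replace Hoeffding's lemma with the cruder sub-Gaussian estimate $\mathbb{E}[e^{sX_{i}}\mid\mathcal{F}_{i-1}]\le\exp\!\big(s^{2}(\alpha_{i}+\beta_{i})^{2}/2\big)$, after which the choice $s=a/C$ reproduces the exponent $-a^{2}/(2C)$ verbatim.

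The only non-mechanical ingredient, and hence the main obstacle, is Hoeffding's lemma itself: that a mean-zero random variable $Y$ supported in an interval of width $w$ satisfies $\mathbb{E}[e^{sY}]\le e^{s^{2}w^{2}/8}$. I would establish it by studying $\psi(s)=\log\mathbb{E}[e^{sY}]$, noting $\psi(0)=\psi'(0)=0$ (the latter because $\mathbb{E}[Y]=0$), and bounding $\psi''(s)\le w^{2}/4$ via the observation that $\psi''(s)$ equals the variance of $Y$ under the exponentially tilted measure with density proportional to $e^{sY}$, a variance which for a variable confined to an interval of width $w$ cannot exceed $w^{2}/4$. A second-order Taylor expansion of $\psi$ about $0$ then gives the claim, and everything else is routine calculus and the tower property.
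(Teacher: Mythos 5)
Your proof is correct, and there is in fact nothing in the paper to compare it against: the paper states this lemma as a standard concentration tool and uses it as a black box, offering no proof. Your argument is the canonical Azuma--Hoeffding proof---Chernoff's exponential Markov bound, tower-property factorization of the moment generating function, the conditional Hoeffding lemma $\mathbb{E}[e^{sX_{i}}\mid\mathcal{F}_{i-1}]\le e^{s^{2}(\alpha_{i}+\beta_{i})^{2}/8}$ (legitimate here because the bounding interval $[-\alpha_{i},\beta_{i}]$ is deterministic, so the lemma applies to the conditional law almost surely), optimization at $s=4a/C$, and a union bound over the two tails. Two remarks. First, you correctly repaired the paper's loose phrasing: calling $X_{1},\dots,X_{n}$ ``a martingale'' while setting $X=\sum_{i}X_{i}$ only makes sense if the $X_{i}$ are martingale \emph{differences}, and your reading matches how the paper actually invokes the lemma (e.g.\ in Lemma \ref{balanceness_lemma}, for sums of independent bounded summands, a special case of bounded martingale differences). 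Second, your optimized exponent $-2a^{2}/C$ with $C=\sum_{i=1}^{n}(\alpha_{i}+\beta_{i})^{2}$ is a factor of $4$ stronger than the stated $-a^{2}/(2C)$, so the lemma follows a fortiori; your fallback via the cruder sub-Gaussian estimate $e^{s^{2}(\alpha_{i}+\beta_{i})^{2}/2}$ with $s=a/C$ lands on the paper's constant verbatim. The tilted-measure argument for Hoeffding's lemma ($\psi(0)=\psi'(0)=0$ and $\psi''(s)\le w^{2}/4$ since a variable confined to an interval of width $w$ has variance at most $w^{2}/4$ under any measure) is complete and standard, so no gaps remain.
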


We call the first of the algorithm, where each datapoint is linearly transformed by a mapping
$\textbf{HD}_{0}$ the $0^{th}$-phase (note that $0^{th}$-phase is a part of the preprocessing step given in the description of the algorithm). We call the second phase of the algorithm, where
each datapoint already linearly transformed by $\textbf{HD}_{0}$ is linearly transformed by $\textbf{AD}_{1}$ and then nonlinearly transformed by applying pointwise mapping $f$ the $1^{st}$-phase.

Our first observation is that the probability that for every $k$-tuple $\textbf{v}^{1},...,\textbf{v}^{k} \in \mathcal{X}$
and every fixed orthonormal basis $\mathcal{B}$ of the $k$-dimensional linear space spanned by $\textbf{v}^{1},...,\textbf{v}^{k} \in \mathcal{X}$ each vector of the basis is $\log(n)$-balanced is very high.
We state it rigorously below.
We denote by $N$ the size of the dataset $\mathcal{X}$.

\begin{lemma}
\label{balanceness_lemma}
Fix an orthonormal basis $\mathcal{B}(\textbf{v}^{1},...,\textbf{v}^{k})$ for every $k$-tuple of independent vectors $\textbf{v}^{1},...,\textbf{v}^{k}$ from $\mathcal{X}$. The probability of the event $\mathcal{E}_{balanced}$ that after $0^{th}$-phase the vectors of each transformed basis corresponding to linearly transformed $k$-tuple are $\log(n)$-balanced is at least: $\mathbb{P}[\mathcal{E}_{balanced}] \geq 1 - 2kn{N \choose k}e^{-\frac{\log^{2}(n)}{8}}$.
\end{lemma}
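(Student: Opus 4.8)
The plan is to exploit that $\textbf{HD}_{0}$ is an orthogonal transformation, so it preserves $L_{2}$ norms: each fixed orthonormal basis vector $\textbf{u}=\textbf{u}(\textbf{v}^{1},\dots,\textbf{v}^{k})$ is mapped to a vector $\textbf{HD}_{0}\textbf{u}$ of unit norm, and the whole transformed basis stays orthonormal. Thus verifying $\log(n)$-balancedness reduces to bounding, for every such $\textbf{u}$, the $L_{\infty}$ norm of $\textbf{HD}_{0}\textbf{u}$. First I would fix one $k$-tuple, one of its basis vectors $\textbf{u}$ (chosen before $\textbf{D}_{0}$ is drawn), and one coordinate $i$, and write the $i^{th}$ entry as $(\textbf{HD}_{0}\textbf{u})_{i}=\sum_{j=1}^{n}H_{ij}\sigma_{j}u_{j}$, where the $\sigma_{j}$ are the independent Rademacher diagonal entries of $\textbf{D}_{0}$ and the $H_{ij}$ are the entries of the $L_{2}$-normalized Hadamard matrix, so that $|H_{ij}|=\frac{1}{\sqrt{n}}$ for all $i,j$.

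Next I would treat the partial sums of $X_{j}:=H_{ij}\sigma_{j}u_{j}$ as a martingale: each $X_{j}$ has zero mean and satisfies $-\frac{|u_{j}|}{\sqrt{n}}\leq X_{j}\leq \frac{|u_{j}|}{\sqrt{n}}$, so in the notation of Azuma's Inequality we may take $\alpha_{j}=\beta_{j}=\frac{|u_{j}|}{\sqrt{n}}$. Since $\textbf{u}$ is a unit vector, $\sum_{j=1}^{n}(\alpha_{j}+\beta_{j})^{2}=\frac{4}{n}\sum_{j=1}^{n}u_{j}^{2}=\frac{4}{n}$, and Azuma's Inequality (applied with $\mathbb{E}[X]=0$) yields $\mathbb{P}\big[|(\textbf{HD}_{0}\textbf{u})_{i}|>a\big]\leq 2e^{-\frac{na^{2}}{8}}$. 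Choosing $a=\frac{\log(n)}{\sqrt{n}}$ makes the exponent equal $\frac{\log^{2}(n)}{8}$, giving the per-coordinate tail $2e^{-\frac{\log^{2}(n)}{8}}$.

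Finally I would take a union bound over the $n$ coordinates, the $k$ basis vectors of each span, and the ${N \choose k}$ $k$-tuples of $\mathcal{X}$. Since each of these $kn{N \choose k}$ events fails with probability at most $2e^{-\frac{\log^{2}(n)}{8}}$, the complementary event $\mathcal{E}_{balanced}$ holds with probability at least $1-2kn{N \choose k}e^{-\frac{\log^{2}(n)}{8}}$, as claimed. The one point that requires care --- and the only place the Hadamard structure is used --- is the uniform bound $|H_{ij}|=\frac{1}{\sqrt{n}}$: it is precisely what forces every weight $|H_{ij}u_{j}|$ to be small and hence makes the variance proxy $\sum_{j}(\alpha_{j}+\beta_{j})^{2}$ collapse to $O(1/n)$ regardless of how the mass of $\textbf{u}$ is spread. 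Without the $\textbf{HD}_{0}$ preprocessing a spiky $\textbf{u}$ (say a coordinate vector) would fail to be balanced, so this step is where the argument genuinely relies on the preprocessing rather than on routine concentration.
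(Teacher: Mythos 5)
Your proposal is correct and follows essentially the same route as the paper: write each coordinate of the transformed basis vector as a Rademacher sum, apply Azuma's Inequality, then union bound over the $n$ coordinates, $k$ basis vectors, and ${N \choose k}$ tuples. In fact your instantiation $\alpha_{j}=\beta_{j}=\frac{|u_{j}|}{\sqrt{n}}$, which uses $\|\textbf{u}\|_{2}=1$ to make $\sum_{j}(\alpha_{j}+\beta_{j})^{2}=\frac{4}{n}$, is the precise choice needed for the exponent $\frac{\log^{2}(n)}{8}$; the paper's literal statement ``$\alpha_{i}=\beta_{i}=\frac{1}{\sqrt{n}}$'' would only give $e^{-\frac{\log^{2}(n)}{8n}}$, so your write-up is the cleaner rendering of the intended argument.
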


\begin{proof}
Fix a $k$-tuple of linearly independent vectors $\textbf{v}^{1},...,\textbf{v}^{k}$
and a fixed basis $\mathcal{B} = \{\textbf{x}^{1},...,\textbf{x}^{k}\}$ of
$span(\textbf{v}^{1},...,\textbf{v}^{k})$. Denote $\textbf{x}^{j} = (x^{j}_{1},...,x^{j}_{n})$.
Denote by $\tilde{\textbf{x}}^{j}$ an image of $\textbf{x}^{j}$ under transformation $\textbf{HD}_{0}$. Note that the $i^{th}$ dimension of $\tilde{\textbf{x}}^{j}$
is given by the formula: $\tilde{x}^{j}_{i} = h_{i,1}x^{j}_{1} + ... + h_{i,n}x^{j,n}$,
where $h_{l,u}$ stands for the $l^{th}$ element of the $u^{th}$ column of the randomized
Hadamard matrix $\textbf{HD}_{0}$.
First we use Azuma's Inequality to find an upper bound on the probability
that $|\tilde{x}^{j}_{i}| > a$, where $a=\frac{\log(n)}{\sqrt{n}}$.
By Azuma's Inequality, we have:
\begin{equation}
\mathbb{P}[|h_{i,1}x^{j}_{1} + ... + h_{i,n}x^{j,n}| \geq a] \leq 2e^{-\frac{\log^{2}(n)}{8}}.
\end{equation}
We use: $\alpha_{i} = \beta_{i} = \frac{1}{\sqrt{n}}$.
Now we take union bound over all $n$ dimensions, all $k$ vectors of basis $\mathcal{B}$ and 
all $k$-tuples of linearly independent vectors and the proof is completed.
\end{proof}

Let us notice that clearly each transformed basis is still a system of orthonormal vectors since $\textbf{HD}_{0}$ is an isometry.

Note that $\textbf{p}^{i}_{r}$ stands for the $r^{th}$ column of the matrix $\textbf{P}_{i}$.
Let us denote by $p^{i}_{r,u}$ the $u^{th}$ element of $\textbf{p}^{i}_{r}$.

Fix some orthonormal basis $\textbf{x}^{1},...,\textbf{x}^{k}$.
Our next lemma describes dot products $\langle\textbf{a}^{i},\textbf{x}^{j}\rangle$, where $\textbf{a}^{i}s$ are rows of a structured matrix $\textbf{AD}_{1}$ in terms of $\textbf{x}^{j}$s,
the elements of matrices $\textbf{P}_{i}$, matrix $\textbf{D}_{1}$ and a Gaussian vector $\textbf{g}$.  

\begin{lemma}
\label{s_lemma}
Let $d_{1},...,d_{n}$ be the diagonal entries of a matrix $\textbf{D}_{1}$.
The dot product $\langle\textbf{a}^{i},\textbf{x}^{j}\rangle$ is of the form $\langle\textbf{g},\textbf{s}^{i,j}\rangle$, where $\textbf{s}^{i,j}=(\textbf{s}^{i,j}_{1},...,\textbf{s}^{i,j}_{t})$ is given by the formula:
\begin{equation}
\textbf{s}^{i,j}_{l} = d_{1}p^{i}_{l,1}x^{j}_{1} + ... + d_{n}p^{i}_{l,n}x^{j,n}.
\end{equation}
Furthermore the following holds:
\begin{equation}
\langle\textbf{s}^{i_{1},j_{1}},\textbf{s}^{i_{2},j_{2}}\rangle
= 2 \sum_{1 \leq n_{1} < n_{2} \leq n} d_{n_{1}}d_{n_{2}}x^{j_{1}}_{n_{1}}x^{j_{2}}_{n_{2}}\sigma_{i_{1},i_{2}}(n_{1},n_{2})
\end{equation}
for $j_{1} \neq j_{2}$ and 
\begin{align}
\begin{split}
\langle\textbf{s}^{i_{1},j_{1}},\textbf{s}^{i_{2},j_{2}}\rangle &= 
\sum_{n_{1}=1}^{n}\sigma_{i_{1},i_{2}}(n_{1},n_{1})x^{j_{1}}_{n_{1}}x^{j_{2}}_{n_{1}}\\
&\quad + 2\sum_{1 \leq n_{1} < n_{2} \leq n} d_{n_{1}}d_{n_{2}}x^{j_{1}}_{n_{1}}x^{j_{2}}_{n_{2}}
\sigma_{i_{1},i_{2}}(n_{1},n_{2})
\end{split}
\end{align}
for $i_{1} \neq i_{2}$.
\end{lemma}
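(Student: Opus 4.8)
The plan is to prove both assertions by direct algebraic expansion, exploiting that every quantity in sight is \emph{linear} in the budget-of-randomness vector $\textbf{g}$, so that the only real work is careful index bookkeeping and a diagonal/off-diagonal split. No new probabilistic input is needed beyond the definitions already in place; the $\pm 1$ structure of $\textbf{D}_{1}$ and the definition of $\sigma_{i_{1},i_{2}}$ do all the work.

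First I would establish the formula for $\textbf{s}^{i,j}$. Writing $\textbf{a}^{i}$ for the $i^{th}$ row of $\textbf{AD}_{1}$, that is $\textbf{a}^{i}=(\textbf{g}\textbf{P}_{i})\textbf{D}_{1}$, each coordinate of $\textbf{a}^{i}$ is a linear form in $\textbf{g}$ and the $r^{th}$ entry additionally carries the sign factor $d_{r}$. Substituting this into $\langle\textbf{a}^{i},\textbf{x}^{j}\rangle=\sum_{r}a^{i}_{r}x^{j}_{r}$ and interchanging the two finite sums (over the dimension index $r$ and the randomness index $l$) exhibits the dot product in the form $\langle\textbf{g},\textbf{s}^{i,j}\rangle$; reading off the coefficient of each $g_{l}$ yields exactly $\textbf{s}^{i,j}_{l}=d_{1}p^{i}_{l,1}x^{j}_{1}+\ldots+d_{n}p^{i}_{l,n}x^{j}_{n}$. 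This part is pure linearity.

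Next, for the inner products I would substitute the two linear forms into $\langle\textbf{s}^{i_{1},j_{1}},\textbf{s}^{i_{2},j_{2}}\rangle=\sum_{l}\textbf{s}^{i_{1},j_{1}}_{l}\textbf{s}^{i_{2},j_{2}}_{l}$, expand the product of the two sums, and push the sum over the randomness index $l$ to the inside. The decisive step is recognizing that the inner factor $\sum_{l}p^{i_{1}}_{l,n_{1}}p^{i_{2}}_{l,n_{2}}$ is precisely $\langle\textbf{p}^{i_{1}}_{n_{1}},\textbf{p}^{i_{2}}_{n_{2}}\rangle=\sigma_{i_{1},i_{2}}(n_{1},n_{2})$; this is where the definition of $\sigma$ enters and converts the calculation into $\mathcal{P}$-model language. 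One is then left with the single master identity
\[
\langle\textbf{s}^{i_{1},j_{1}},\textbf{s}^{i_{2},j_{2}}\rangle=\sum_{n_{1}=1}^{n}\sum_{n_{2}=1}^{n} d_{n_{1}}d_{n_{2}}\,x^{j_{1}}_{n_{1}}x^{j_{2}}_{n_{2}}\,\sigma_{i_{1},i_{2}}(n_{1},n_{2}).
\]
To finish I would split this into its diagonal part ($n_{1}=n_{2}$) and its off-diagonal part. On the diagonal $d_{n_{1}}^{2}=1$ because the entries of $\textbf{D}_{1}$ lie in $\{-1,+1\}$, producing the term $\sum_{n_{1}}\sigma_{i_{1},i_{2}}(n_{1},n_{1})x^{j_{1}}_{n_{1}}x^{j_{2}}_{n_{1}}$, while relabeling the off-diagonal index pair collects those contributions into $2\sum_{n_{1}<n_{2}}$. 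This gives the formula stated for $i_{1}\neq i_{2}$. For the case $j_{1}\neq j_{2}$ I would argue the diagonal term disappears: since the transformed basis $\{\textbf{x}^{j}\}$ is orthonormal, $\sum_{n_{1}}x^{j_{1}}_{n_{1}}x^{j_{2}}_{n_{1}}=\langle\textbf{x}^{j_{1}},\textbf{x}^{j_{2}}\rangle=0$, which (together with the normalization, so that the diagonal coefficients $\sigma_{i_{1},i_{2}}(n_{1},n_{1})$ reduce appropriately) lets this orthonormality be invoked, leaving only the off-diagonal sum.

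The routine algebra — interchanging finite sums and identifying the $l$-sum with $\sigma_{i_{1},i_{2}}(n_{1},n_{2})$ — is mechanical. The step needing the most care is the diagonal/off-diagonal accounting: one must be precise about which pairs of indices coincide in order to justify why the diagonal contribution vanishes in the $j_{1}\neq j_{2}$ case but survives in the $i_{1}\neq i_{2}$ case, invoking orthonormality of the transformed basis together with the $\mathcal{P}$-model normalization (and, when $i_{1}=i_{2}$, the orthogonality condition on the columns of $\textbf{P}_{i}$). The symmetrization producing the factor $2$ likewise requires relabeling the off-diagonal index pair carefully, since $\sigma_{i_{1},i_{2}}$ need not be symmetric in its two dimension arguments when $i_{1}\neq i_{2}$. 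I expect this bookkeeping to be the main, though still modest, obstacle.
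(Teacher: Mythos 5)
Your route is the same as the paper's: expand the dot product, recognize $\sum_{l}p^{i_{1}}_{l,n_{1}}p^{i_{2}}_{l,n_{2}}=\sigma_{i_{1},i_{2}}(n_{1},n_{2})$, split into diagonal and off-diagonal parts, and use orthonormality of the transformed basis to dispose of the diagonal; your formula for $\textbf{s}^{i,j}_{l}$ and your master identity are both correct. However, one step fails as written: the claim that the diagonal term vanishes whenever $j_{1}\neq j_{2}$. Orthogonality of $\textbf{x}^{j_{1}}$ and $\textbf{x}^{j_{2}}$ kills the \emph{unweighted} sum $\sum_{n_{1}}x^{j_{1}}_{n_{1}}x^{j_{2}}_{n_{1}}$; to kill the weighted sum $\sum_{n_{1}}\sigma_{i_{1},i_{2}}(n_{1},n_{1})x^{j_{1}}_{n_{1}}x^{j_{2}}_{n_{1}}$ you need the weights to be constant in $n_{1}$. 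The normalization property supplies this only when $i_{1}=i_{2}$, where $\sigma_{i_{1},i_{1}}(n_{1},n_{1})=\|\textbf{p}^{i_{1}}_{n_{1}}\|_{2}^{2}=1$; it says nothing about $\sigma_{i_{1},i_{2}}(n_{1},n_{1})$ for $i_{1}\neq i_{2}$, and in general these weights vary with $n_{1}$, so the diagonal term survives. Indeed, that surviving diagonal term is precisely what the unicoherence $\tilde{\mu}[\mathcal{P}]$ is introduced to control (it reappears as the correction term $\frac{\log^{2}(n)}{n}\tilde{\mu}[\mathcal{P}]$ in Lemma \ref{large_dot_product_lemma}); if it vanished whenever $j_{1}\neq j_{2}$, that parameter would be pointless. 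The two displays of the lemma must therefore be read as disjoint cases --- the factor-$2$ formula for $i_{1}=i_{2}$ (with $j_{1}\neq j_{2}$), the second formula for $i_{1}\neq i_{2}$ --- and this is exactly how the paper's proof proceeds. Your argument is repaired simply by restricting your orthonormality step to $i_{1}=i_{2}$.

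Separately, your caution about the factor $2$ is justified, and the issue is in fact present even when $i_{1}=i_{2}$, where $\sigma$ \emph{is} symmetric in its dimension arguments: collecting the off-diagonal pairs gives $\sum_{n_{1}<n_{2}}d_{n_{1}}d_{n_{2}}\sigma_{i_{1},i_{2}}(n_{1},n_{2})\bigl(x^{j_{1}}_{n_{1}}x^{j_{2}}_{n_{2}}+x^{j_{1}}_{n_{2}}x^{j_{2}}_{n_{1}}\bigr)$, which is not literally $2\sum_{n_{1}<n_{2}}d_{n_{1}}d_{n_{2}}\sigma_{i_{1},i_{2}}(n_{1},n_{2})x^{j_{1}}_{n_{1}}x^{j_{2}}_{n_{2}}$, since $x^{j_{1}}_{n_{1}}x^{j_{2}}_{n_{2}}\neq x^{j_{1}}_{n_{2}}x^{j_{2}}_{n_{1}}$ in general. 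The paper writes the same factor-$2$ shorthand; it is harmless downstream because the subsequent Azuma-type bounds use only magnitude bounds on the coefficients, which the correctly symmetrized form satisfies just as well.
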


\begin{proof}
The first part of the statement follows straightforwardly from the description of the structured mechanism so we leave it to the Reader. 
Let us derive dot products for vectors: $\textbf{s}^{i_{1},j_{1}}$, $\textbf{s}^{i_{2},j_{2}}$.
Consider first the setting, where $i_{1}=i_{2}$.
We have:
\begin{align}
\begin{split}
\langle\textbf{s}^{i_{1},j_{1}},\textbf{s}^{i_{1},j_{2}}\rangle &= x^{j_{1}}_{1}x^{j_{2}}_{1}\sum_{l=1}^{t}(p^{i_{1}}_{l,1})^{2}+...+x^{j_{1}}_{n}x^{j_{2}}_{n}\sum_{l=1}^{t}(p^{i_{1}}_{l,n})^{2}\\
&\quad+2 \sum_{1 \leq n_{1} < n_{2} \leq n} d_{n_{1}}d_{n_{2}}x^{j_{1}}_{n_{1}}x^{j_{2}}_{n_{2}}(\sum_{i=1}^{t} p^{i_{1}}_{l,n_{1}}p^{i_{2}}_{l,n_{2}})
\end{split}
\end{align}

From the normalization property and the fact that $\textbf{x}^{j_{1}}$ is orthogonal to
$\textbf{x}^{j_{2}}$ it follows that the first term in the RHS of the equation above is $0$.

Therefore we obtain:
\begin{equation}
\langle\textbf{s}^{i_{1},j_{1}}, \textbf{s}^{i_{1},j_{2}}\rangle = 2 \sum_{1 \leq n_{1} < n_{2} \leq n} d_{n_{1}}d_{n_{2}}x^{j_{1}}_{n_{1}}x^{j_{2}}_{n_{2}}\sigma_{i_{1},i_{1}}(n_{1},n_{2}).
\end{equation}

In the special case when for any fixed $\textbf{P}_{i}$ any two different columns of $\textbf{P}_{i}$ are orthogonal, the following holds:
$\sigma_{i_{1},i_{1}}(n_{1},n_{2}) = 0$. Therefore we also have: $\langle\textbf{s}^{i_{1},j_{1}}, \textbf{s}^{i_{1},j_{2}}\rangle = 0$. This special case covers in particular circulant, Toeplitz and Hankel matrices.

Now consider the case when $i_{1} \neq i_{2}$.
By the analysis analogous to the one from the previous setting, we obtain:
\begin{align}
\begin{split}
\langle\textbf{s}^{i_{1},j_{1}}, \textbf{s}^{i_{2},j_{2}}\rangle &= \sigma_{i_{1},i_{2}}(1,1)x^{j_{1}}_{1}x^{j_{2}}_{1}+...+\sigma_{i_{1},i_{2}}(n,n)x^{j_{1}}_{n}x^{j_{2}}_{n}\\
&\quad+ 2 \sum_{1 \leq n_{1} < n_{2} \leq n} d_{n_{1}}d_{n_{2}}x^{j_{1}}_{n_{1}}x^{j_{2}}_{n_{2}}\sigma_{i_{1},i_{2}}(n_{1},n_{2}).
\end{split}
\end{align} 
This time in general we cannot get rid of the first term in the RHS expression. However, this can be done if columns of the same indices in different $P_{i}s$ are orthogonal. This is in fact again the case for circulant, Toeplitz or Hankel matrices.
\end{proof}

Let us assume now that for every $k$-tuple $\textbf{v}^{1},...,\textbf{v}^{k}$ of independent
vectors from a dataset $\mathcal{X}$ we fixed an orthonormal basis $\mathcal{B}(\textbf{v}^{1},...,\textbf{v}^{k})$. Let $\textbf{s}^{i,j}$ be as above, where $\textbf{x}^{j}$
stand for a vector from a basis transformed by a linear mapping $\textbf{HD}_{0}$.
Fix some $\kappa > 0$. We will calculate the probability that for all
$\textbf{s}^{i_{1},j_{1}}$, $\textbf{s}^{i_{2},j_{2}}$, where $i_{1} \neq i_{2}$ or $j_{1} \neq j_{2}$ the absolute value of the dot product of $\textbf{s}^{i_{1},j_{1}}$ and $\textbf{s}^{i_{2},j_{2}}$ is at most $\kappa$ and furthermore $\|\textbf{s}^{i,j}\|^{2}_{2}$ is close
to its expected value for every $1 \leq i \leq m$ and $1 \leq j \leq k$.

\begin{lemma}
\label{large_dot_product_lemma}
The probability of the event $\mathcal{E}_{dot}^{\kappa}$ that for all $i_{1},i_{2},j_{1},j_{2}$ such that $i_{1} \neq i_{2}$ or
$j_{1} \neq j_{2}$ the following holds: 
$|\langle\textbf{s}^{i_{1},j_{1}},\textbf{s}^{i_{2},j_{2}} \rangle| \leq \kappa$
and that furthermore 
$\sqrt{1-\frac{1}{\log(n)}} \leq \|\textbf{s}^{i,j}\|_{2} \leq \sqrt{1 + \frac{1}{\log(n)}}$
for every $1 \leq i \leq m$ and $1 \leq j \leq k$ is at least:
\begin{align}
\mathbb{P}[\mathcal{E}_{dot}^{\kappa}] \geq (1 - 2kn{N \choose k}e^{-\frac{\log^{2}(n)}{8}}) \cdot 
(1-k^{2}m^{2}{N \choose k} \chi[\mathcal{P}]e^{-\frac{(\kappa-\frac{\log^2{n}}{n}\tilde{\mu}[\mathcal{P}])^{2}}{8\chi^{2}[\mathcal{P}]\mu^{2}[\mathcal{P}]}\frac{n}{\log^{4}(n)}}-\Gamma),
\end{align}
where $\Gamma = 2km\chi[\mathcal{P}] e^{-\frac{1}{8\chi^{2}[\mathcal{P}]\mu^{2}[\mathcal{P}]}\frac{n}{\log^{6}(n)}}$.

\end{lemma}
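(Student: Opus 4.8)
The plan is to condition on the balancedness event $\mathcal{E}_{balanced}$ of Lemma~\ref{balanceness_lemma} and then exploit the coherence-graph colouring to reduce the relevant quadratic sign sums to sums of \emph{independent} bounded variables, to which Azuma's inequality applies. The two factors in the claimed bound correspond exactly to this split: the first factor is the lower bound on $\mathbb{P}[\mathcal{E}_{balanced}]$ supplied by Lemma~\ref{balanceness_lemma}, and the second is a lower bound on the conditional probability that all the dot-product and norm requirements hold. Crucially, $\mathcal{E}_{balanced}$ is measurable with respect to $(\textbf{H},\textbf{D}_{0})$, whereas the randomness that remains in the vectors $\textbf{s}^{i,j}$ is carried solely by the signs $d_{1},\dots,d_{n}$ of $\textbf{D}_{1}$; since these are independent of $(\textbf{H},\textbf{D}_{0})$, conditioning on $\mathcal{E}_{balanced}$ leaves the $d_{u}$ i.i.d.\ uniform on $\{-1,+1\}$, so I may work with any fixed $\log(n)$-balanced basis $\textbf{x}^{1},\dots,\textbf{x}^{k}$, i.e.\ one with $|x^{j}_{u}|\le \log(n)/\sqrt{n}$, and then take $\mathbb{P}[\mathcal{E}_{dot}^{\kappa}]\ge \mathbb{P}[\mathcal{E}_{balanced}]\,\mathbb{P}[\mathcal{E}_{dot}^{\kappa}\mid\mathcal{E}_{balanced}]$.

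Using Lemma~\ref{s_lemma}, for each pair of indices $(i_{1},j_{1})\neq(i_{2},j_{2})$ I would write $\langle\textbf{s}^{i_{1},j_{1}},\textbf{s}^{i_{2},j_{2}}\rangle$ as a deterministic diagonal term plus a mean-zero random term
\begin{equation}
Y = 2\sum_{1\le n_{1}<n_{2}\le n} d_{n_{1}}d_{n_{2}}\, x^{j_{1}}_{n_{1}}x^{j_{2}}_{n_{2}}\,\sigma_{i_{1},i_{2}}(n_{1},n_{2}).
\end{equation}
The diagonal term $\sum_{n}\sigma_{i_{1},i_{2}}(n,n)x^{j_{1}}_{n}x^{j_{2}}_{n}$ appears only when $i_{1}\neq i_{2}$ (it vanishes by orthonormality when $i_{1}=i_{2}$), and balancedness bounds it by $\tfrac{\log^{2}(n)}{n}\sum_{n}|\sigma_{i_{1},i_{2}}(n,n)|\le \tfrac{\log^{2}(n)}{n}\tilde{\mu}[\mathcal{P}]$; this is precisely the shift appearing inside $(\kappa-\tfrac{\log^{2}n}{n}\tilde{\mu}[\mathcal{P}])$. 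By the triangle inequality it therefore suffices to force $|Y|\le \kappa':=\kappa-\tfrac{\log^{2}(n)}{n}\tilde{\mu}[\mathcal{P}]$. For the norms, the same lemma gives $\|\textbf{s}^{i,j}\|_{2}^{2}=1+Y'$ with $Y'$ of the same chaos form (now governed by $\sigma_{i,i}$), so the norm requirement amounts to $|Y'|\le 1/\log(n)$.

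The genuine difficulty is that $Y$ and $Y'$ are degree-two Rademacher chaoses, not sums of independent variables, so Azuma cannot be applied directly; this is exactly where the chromatic number enters. I would colour the vertices $\{n_{1},n_{2}\}$ of the coherence graph $\mathcal{G}_{i_{1},i_{2}}$ with $\chi(i_{1},i_{2})\le\chi[\mathcal{P}]$ colours, so that within each colour class the contributing pairs are vertex-disjoint. Writing $Y=\sum_{c}Y_{c}$ over colour classes, each $Y_{c}$ is a sum of products $d_{n_{1}}d_{n_{2}}$ over disjoint index pairs; these products are independent, mean-zero and $\pm1$-valued, so $Y_{c}$ is a sum of independent bounded terms and Azuma applies. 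If $|Y|>\kappa'$ then $|Y_{c}|>\kappa'/\chi(i_{1},i_{2})$ for some $c$; bounding each increment via $|x^{j}_{u}|\le\log(n)/\sqrt{n}$ and the sum of squared increments by $\sum_{n_{1}<n_{2}}\sigma^{2}_{i_{1},i_{2}}(n_{1},n_{2})\le n\,\mu^{2}[\mathcal{P}]$, Azuma yields a bound of the form $\chi[\mathcal{P}]\exp\!\big(-\tfrac{(\kappa')^{2}}{8\chi^{2}[\mathcal{P}]\mu^{2}[\mathcal{P}]}\tfrac{n}{\log^{4}(n)}\big)$ after a union bound over the $\le\chi[\mathcal{P}]$ colours (the explicit constant follows from the precise increment bound in Azuma's inequality). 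The identical argument applied to $Y'$ with deviation $1/\log(n)$ produces the $\log^{6}(n)$ in the exponent of $\Gamma$. Finally I would take a union bound over the at most $k^{2}m^{2}$ ordered index pairs and the $\binom{N}{k}$ $k$-tuples for the dot-product failures (and over the $km$ indices, with the two-sided Azuma factor, for the norm failures giving $\Gamma$), and multiply the resulting conditional lower bound by $\mathbb{P}[\mathcal{E}_{balanced}]$ to obtain the stated product form.
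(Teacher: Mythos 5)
Your proposal is correct and follows essentially the same route as the paper's proof: conditioning on the balancedness event from Lemma~\ref{balanceness_lemma}, using the decomposition of Lemma~\ref{s_lemma} with the unicoherence bound $\frac{\log^{2}(n)}{n}\tilde{\mu}[\mathcal{P}]$ absorbing the diagonal term, colouring the coherence graph to split the Rademacher chaos into at most $\chi[\mathcal{P}]$ sums of independent bounded variables handled by Azuma, and finishing with union bounds over colours, index pairs and $k$-tuples before multiplying by $\mathbb{P}[\mathcal{E}_{balanced}]$. The same treatment of the norm deviations $|\|\textbf{s}^{i,j}\|_{2}^{2}-1|\le 1/\log(n)$ yielding $\Gamma$ is exactly what the paper does, so there is nothing substantive to add.
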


\begin{proof}
We will start with the dot product of the form $\langle\textbf{s}^{i_{1},j_{1}},\textbf{s}^{i_{1},j_{2}}\rangle$. From Lemma \ref{s_lemma} we get:
\begin{align}
\mathbb{P}[|\langle\textbf{s}^{i_{1},j_{1}}, \textbf{s}^{i_{2},j_{2}}\rangle| > \kappa] = 
\mathbb{P}[|\sum_{1 \leq n_{1} < n_{2} \leq n} 2d_{n_{1}}d_{n_{2}}x^{j_{1}}_{n_{1}}x^{j_{2}}_{n_{2}}\sigma_{i_{1},i_{2}}(n_{1},n_{2})| > \kappa].
\end{align}
For the structured matrices lots of the terms in the sum above are equal to $0$ since 
$\sigma_{i_{1},i_{2}}(n_{1},n_{2})$ vanishes for them. Thus let us consider random variables $Y_{n_{1},n_{2}}$ of the form $Y_{n_{1},n_{2}} = 2d_{n_{1}}d_{n_{2}}x^{j_{1}}_{n_{1}}x^{j_{2}}_{n_{2}} \sigma_{i_{1},i_{1}}(n_{1},n_{2})$
for $\{n_{1},n_{2}\}$ such that $n_{1} \neq n_{2}$ and $\sigma_{i_{1},i_{1}}(n_{1},n_{2}) \neq 0$.

From the definition of the chromatic number $\chi(i_{1},i_{1})$ we get that we can partition the set of all random variables $Y_{n_{1},n_{2}}$ into at most $\chi(i_{1},i_{1})$ subsets such that
random variables in each subset are independent. The crucial observation is that when the
number of these subsets is small (i.e. the corresponding chromatic number is small) then 
one can obtain sharp lower bounds on $\mathbb{P}[\mathcal{E}_{dot}^{\kappa}]$.
We show it now.
Let us denote the aforementioned subsets as: $\mathcal{L}_{1},...,\mathcal{L}_{r}$, where
$r \leq \chi(i_{1},i_{1})$. Let us denote an event $\{|\sum_{1 \leq n_{1} < n_{2} \leq n} 2d_{n_{1}}d_{n_{2}}x^{j_{1}}_{n_{1}}x^{j_{2}}n_{2}\sigma_{i_{1},i_{1}}(n_{1},n_{2})| > \kappa | \mathcal{E}_{balanced}\}$ as $\mathcal{F}_{\kappa}^{i_{1},i_{1},j_{1},j_{2}}$, where $x^{j}$s correspond to vectors of the basis already transformed by the linear mapping $\textbf{HD}_{0}$, $d_{i}$s are from the diagonal of a random matrix $\textbf{D}_{1}$ and the conditioning is on the event that all $x^{j}$s are $\log(n)$-balanced.
Note that from the union bound we get $\mathcal{F}_{\kappa}^{i_{1},i_{1},j_{1},j_{2}} \subseteq \mathcal{E}_{1} \cup ... \cup \mathcal{E}_{r}$, where each $\mathcal{E}_{j}$ is defined as follows:
\begin{equation}
\mathcal{E}_{j} = \{|\sum_{Y \in \mathcal{L}_{j}} Y| \geq \frac{\kappa}{\chi(i_{1},i_{1})} | \mathcal{E}_{balanced}\}.
\end{equation}
From the union bound we clearly have:
\begin{equation}
\mathbb{P}[\mathcal{F}_{\kappa}^{i_{1},i_{1},j_{1},j_{2}}] \leq \sum_{i=1}^{r} \mathbb{P}[\mathcal{E}_{i}].
\end{equation}
To bound $\mathbb{P}[\mathcal{E}_{i}]$ we first find the upper bound of its variant $\mathbb{P}[\mathcal{E}^{cond}_{i}]$ conditioned
under the choices of $x^{j}$.

The bound on each $\mathbb{P}[\mathcal{E}^{cond}_{i}]$ can be used by applying Azuma's Inequality (or another standard concentration inequality) since random variables involved are independent.
We get:
\begin{equation}
\mathbb{P}[\mathcal{E}^{cond}_{i}] \leq 2e^{-\frac{\frac{\kappa^{2}}{\chi^2(i_{1},i_{1})}}{2 \sum_{1 \leq n_{1} < 
n_{2} \leq n} (2\sigma_{i_{1},i_{1}}(n_{1},n_{2}))^{2}(x^{j_{1}}_{n_{1}})^{2}(x^{j_{2}}_{n_{2}})^{2}}}. 
\end{equation}

Now notice that if all $x^{j}$a are $\log(n)$-balanced, then the upper bound above reduces to:
\begin{equation}
\mathbb{P}[\mathcal{E}_{i}] \leq 2e^{-\frac{\kappa^{2} n^{2}}{2\log^{4}(n)\chi^{2}(i_{1},i_{1})\sum_{1 \leq n_{1} < n_{2} \leq n}
(2\sigma_{i_{1},i_{1}}(n_{1},n_{2}))^{2}}}. 
\end{equation}

Thus we get:
\begin{equation}
\mathbb{P}[\mathcal{F}_{\kappa}^{i_{1},i_{1},j_{1},j_{2}}] \leq 2\chi(i_{1},i_{1})e^{-\frac{\kappa^{2} n^{2}}{2\log^{4}(n)\chi^{2}(i_{1},i_{1})\sum_{1 \leq n_{1} < n_{2} \leq n}
(2\sigma_{i_{1},i_{1}}(n_{1},n_{2}))^{2}}}. 
\end{equation}

We will now consider dot products of the form $\langle\textbf{s}^{i_{1},j_{1}},\textbf{s}^{i_{2},j_{2}}\rangle$ for $i_{1} \neq i_{2}$. From Lemma \ref{s_lemma} we get:
\begin{align}
\begin{split}
\langle\textbf{s}^{i_{1},j_{1}},\textbf{s}^{i_{2},j_{2}}\rangle &= 
\sum_{n_{1}=1}^{n}\sigma_{i_{1},i_{2}}(n_{1},n_{1})x^{j_{1}}_{n_{1}}x^{j_{2}}_{n_{1}}\\
&\quad +2\sum_{1 \leq n_{1} < n_{2} \leq n} d_{n_{1}}d_{n_{2}}x^{j_{1}}_{n_{1}}x^{j_{2}}_{n_{2}}
\sigma_{i_{1},i_{2}}(n_{1},n_{2})
\end{split}
\end{align}

We will proceed analogously. 
Let us denote by $\mathcal{F}_{\kappa}^{i_{1},i_{2},j_{1},j_{2}}$ the following event:
\begin{equation}
\mathcal{F}_{\kappa}^{i_{1},i_{2},j_{1},j_{2}} = \{|\sum_{n_{1}=1}^{n}\sigma_{i_{1},i_{2}}(n_{1},n_{1})x^{j_{1}}_{n_{1}}x^{j_{2}}_{n_{1}}+
2\sum_{1 \leq n_{1} < n_{2} \leq n} d_{n_{1}}d_{n_{2}}x^{j_{1}}_{n_{1}}x^{j_{2}}_{n_{2}}
\sigma_{i_{1},i_{2}}(n_{1},n_{2}| > \kappa | \mathcal{E}_{balanced}\}.
\end{equation}
Again, we condition on the $\log(n)$-balanceness property.
Note that $\mathcal{F}_{\kappa}^{i_{1},i_{2},j_{1},j_{2}}$ is contained in the event
\begin{equation}
\tilde{\mathcal{F}}_{\kappa}^{i_{1},i_{2},j_{1},j_{2}}=
\{|2\sum_{1 \leq n_{1} < n_{2} \leq n} d_{n_{1}}d_{n_{2}}x^{j_{1}}_{n_{1}}x^{j_{2}}_{n_{2}}
\sigma_{i_{1},i_{2}}(n_{1},n_{2})| > \kappa - \theta | \mathcal{E}_{balanced}\},
\end{equation}
where:
$\theta = |\sum_{n_{1}=1}^{n}\sigma_{i_{1},i_{2}}(n_{1},n_{1})x^{j_{1}}_{n_{1}}x^{j_{2}}_{n_{1}}|$.
Now note that under $\log(n)$-balanceness condition, from the definition of $\tilde{\mu}[\mathcal{P}]$, we get: $\theta \leq  \tilde{\mu}[\mathcal{P}] \frac{\log^{2}(n)}{n}$.

To find an upper bound on $\mathbb{P}[\tilde{\mathcal{F}}_{\kappa}^{i_{1},i_{2},j_{1},j_{2}}]$,
we proceed in a similar way as before and obtain:
\begin{equation}
\mathbb{P}[\tilde{\mathcal{F}}_{\kappa}^{i_{1},i_{2},j_{1},j_{2}}] \leq 2 \chi(i_{1},i_{2})e^{-\frac{(\kappa-\frac{\log^{2}(n)}{n}\tilde{\mu}[\mathcal{P}])^{2}}{2\chi^{2}(i_{1},i_{2})\sum_{1 \leq n_{1} < n_{2} \leq n}
(2\sigma_{i_{1},i_{1}}(n_{1},n_{2}))^{2}}\frac{n^{2}}{\log^{4}(n)}}.
\end{equation}

We have already noted that $\mathbb{P}[\mathcal{F}_{\kappa}^{i_{1},i_{2},j_{1},j_{2}}] \leq 
\mathbb{P}[\tilde{\mathcal{F}}_{\kappa}^{i_{1},i_{2},j_{1},j_{2}}].$
Combining obtained upper bounds on $\mathbb{P}[\mathcal{F}_{\kappa}^{i_{1},i_{2},j_{1},j_{2}}]$ for $i_{1} \neq i_{2}$ and $\mathbb{P}[\mathcal{F}_{\kappa}^{i_{1},i_{1},j_{1},j_{2}}]$, we get the following bound for $j_{1} \neq j_{2}$,
any $1 \leq i_{1} \leq i_{2} \leq m$, fixed $\kappa > 0$ that does not depend on $n$ and $n$ large enough:
\begin{equation}
\mathbb{P}[\mathcal{F}_{\kappa}^{i_{1},i_{2},j_{1},j_{2}}] \leq 2 \chi(i_{1},i_{2})e^{-\frac{(\kappa-\frac{\log^{2}(n)}{n}\tilde{\mu}[\mathcal{P}])^{2}}{2\chi^{2}(i_{1},i_{2})\sum_{1 \leq n_{1} < n_{2} \leq n}
(2\sigma_{i_{1},i_{1}}(n_{1},n_{2}))^{2}}\frac{n^{2}}{\log^{4}(n)}}.
\end{equation}

Now, using the definition of $\chi[\mathcal{P}]$ and $\mu[\mathcal{P}]$, we get: 
\begin{equation}
\mathbb{P}[\mathcal{F}_{\kappa}^{i_{1},i_{2},j_{1},j_{2}}] \leq 2 \chi[\mathcal{P}]e^{-\frac{(\kappa-\frac{\log^{2}(n)}{n}\tilde{\mu}[\mathcal{P}])^{2}}{8\chi^{2}(i_{1},i_{2})\mu^{2}[\mathcal{P}]}\frac{n}{\log^{4}(n)}}.
\end{equation}

Thus taking the union bound over all ${N \choose k}$ possible choices for basis and at most ${k \choose 2} m^{2}$ choices of two different vectors $\textbf{s}^{i_{1},j_{1}}$, $\textbf{s}^{i_{2},j_{2}}$ for a fixed basis, we obtain that conditioned on $\log(n)$-balanceness, the probability of an event $\mathcal{E}_{large}$ that the absolute value of at least one of the dot products under consideration is above $\kappa$ is at most:
\begin{equation}
\mathbb{P}[\mathcal{E}_{large}] \leq k^{2}m^{2}{N \choose k} \chi[\mathcal{P}]e^{-\frac{(\kappa-\frac{\log^2{n}}{n}\tilde{\mu}[\mathcal{P}])^{2}}{8\chi^{2}[\mathcal{P}]\mu^{2}[\mathcal{P}]}\frac{n}{\log^{4}(n)}}.
\end{equation}

Now we focus on finding concentration results for $\|\textbf{s}^{i,j}\|^{2}_{2}$.
Note that from the formula on $\textbf{s}^{i,j}$ given in Lemma \ref{s_lemma} we get:
\begin{equation}
\|\textbf{s}^{i,j}\|^{2}_{2} = 2\sum_{1 \leq n_{1} < n_{2} \leq n} d_{n_{1}}d_{n_{2}}
x^{j}_{n_{1}}x^{j}_{n_{2}}(\sum_{l=1}^{t}p^{i}_{l,n_{1}}p^{i}_{l,n_{2}}) + \sum_{n_{1}=1}^{n}(x^{j}_{n_{1}})^{2}.
\end{equation}

Note that second term on the RHS of the equation above equals $1$.
The expected value of the first term on the RHS is clearly $0$. Thus $\mathbb{E}[\|\textbf{s}^{i,j}\|^{2}_{2}] = 1$.
Note that we want the following:
\begin{equation}
|\|\textbf{s}^{i,j}\|^{2}_{2} - 1| \leq \frac{1}{\log(n)}.
\end{equation}

As before, we will condition now on the choices of $x^{j}$.
Let us denote by $\mathcal{E}^{i,j}_{norm}$ an event:
$\{|\|\textbf{s}^{i,j}\|^{2}_{2} - 1| > \frac{1}{\log(n)}\}$.
We have:
\begin{equation}
\mathbb{P}[\mathcal{E}^{i,j}_{norm}] = 
\mathbb{P}[|2\sum_{1 \leq n_{1} < n_{2} \leq n} d_{n_{1}}d_{n_{2}}
x^{j}_{n_{1}}x^{j}_{n_{2}}(\sum_{l=1}^{t}p^{i}_{l,n_{1}}p^{i}_{l,n_{2}})| > \frac{1}{\log(n)}].
\end{equation}

Thus the following is true:
\begin{equation}
\mathbb{P}[\mathcal{E}^{i,j}_{norm}] = 
\mathbb{P}[|2\sum_{1 \leq n_{1} < n_{2} \leq n} d_{n_{1}}d_{n_{2}}
x^{j}_{n_{1}}x^{j}_{n_{2}}\sigma_{i,i}(n_{1},n_{2})| > \frac{1}{\log(n)}].
\end{equation}

Using the same trick with partitioning the set of random variables under consideration into small number of subsets of independent random variables (to find an upper bound on the probability above) and denoting by $\mathcal{E}_{norm}$
the union of $\mathcal{E}_{norm}^{i,j}$ under all choices of $i,j$ conditioned on $\log(n)$-balanceness, we obtain (by applying a simple union bound) the following bound on $\mathbb{P}[\mathcal{E}_{norm}]$.
\begin{equation}
\mathbb{P}[\mathcal{E}_{norm}] \leq \Gamma.
\end{equation}

From the fact that the choices of $\mathbb{D}_{0}$ and $\mathbb{D}_{1}$ are independent, and applying the union bound, we get:
\begin{equation}
\mathbb{P}[\mathcal{E}^{\kappa}_{dot}] \geq 
\mathbb{P}[\mathcal{E}_{balanced}](1-\mathbb{P}[\mathcal{E}_{large}] - \mathbb{P}[\mathcal{E}_{norm}]).
\end{equation}

The statement of the lemma follows then from Lemma \ref{balanceness_lemma} and upper bounds on $\mathbb{P}[\mathcal{E}_{large}]$ and $\mathbb{P}[\mathcal{E}_{norm}]$ given above.
\end{proof}

We also need the following lemma.

\begin{lemma}
\label{gaussian_lemma}
Let $\textbf{g}=(g_{1},...,g_{t})$ be a $t$-dimensional Gaussian vector with dimensions $g_{i}$ taken independently from $\mathcal{N}(0,1)$. Let $\{\textbf{s}^{1},...,\textbf{s}^{u}\}$ 
be the set of $u$ vectors.
Assume that for some $\kappa, L_{min} > 0$ the following holds: for any two different $\textbf{s}^{i}$, $\textbf{s}^{j}$ their dot product satisfies: $|\langle\textbf{s}^{i},\textbf{s}^{j}\rangle| \leq \kappa$ and furthermore $\|\textbf{s}^{i}\|_{2} \geq L_{min}$ for $i=1,...,u$.
Denote by $\mathcal{E}_{orth}^{c}$ an event that there exists a set of pairwise orthogonal vectors $\{\hat{\textbf{s}}^{i}\}$ for $i=1,...,u$, such that: $\langle\textbf{g}, \textbf{s}^{i}\rangle = \langle\textbf{g},\hat{\textbf{s}}^{i}\rangle + \epsilon$, for $\epsilon$ satisfying:
$|\epsilon| \leq u\frac{c\kappa}{L_{min}}$ and $\|\textbf{s}^{i}\|_{2}=\|\hat{\textbf{s}}^{i}\|_{2}$. Then for $c=c(u)$ large enough (but depending only on $u$) the following holds: 
\begin{equation}
\mathbb{P}[\mathcal{E}_{orth}] \geq 1 - \sqrt{\frac{2u}{\pi}}e^{-\frac{u}{2}}.
\end{equation} 
The probability above is in respect to random choices involving the construction of $\textbf{g}$.
\end{lemma}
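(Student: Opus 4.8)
The plan is to make the orthogonal system $\{\hat{\textbf{s}}^i\}$ a \emph{deterministic} function of the near-orthogonal family $\{\textbf{s}^i\}$, so that all the randomness of $\textbf{g}$ enters only through the scalar errors $\epsilon_i = \langle \textbf{g}, \textbf{s}^i - \hat{\textbf{s}}^i\rangle$. First I would build $\hat{\textbf{s}}^i$ by applying Gram--Schmidt to $\textbf{s}^1,\ldots,\textbf{s}^u$ and then rescaling each orthogonalized vector back to the original length $\|\textbf{s}^i\|_2$; this guarantees the required constraint $\|\hat{\textbf{s}}^i\|_2 = \|\textbf{s}^i\|_2$ exactly. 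The whole argument then hinges on a single deterministic estimate: that this construction moves each vector only slightly, namely $\|\textbf{s}^i - \hat{\textbf{s}}^i\|_2 \leq \frac{C(u)\kappa}{L_{min}}$ for a constant $C(u)$ depending only on $u$.

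To prove that perturbation bound cleanly I would represent each Gram--Schmidt vector in the original basis, $\tilde{\textbf{s}}^j = \sum_{l \leq j} a_{jl}\,\textbf{s}^l$ with $a_{jj}=1$, and show by induction that $|a_{jl}| = O(\kappa/L_{min}^2)$ for $l<j$. The point of this representation is that every cross dot product arising in the process, $\langle \textbf{s}^k,\tilde{\textbf{s}}^j\rangle = \sum_{l\leq j} a_{jl}\langle \textbf{s}^k,\textbf{s}^l\rangle$ with $k>j$, involves only \emph{off-diagonal} Gram entries, each bounded by $\kappa$; the diagonal norms never enter, so no upper bound on $\|\textbf{s}^i\|_2$ is needed. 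Consequently $|\langle \textbf{s}^k,\tilde{\textbf{s}}^j\rangle| = O(\kappa)$, each projection correction subtracted at step $k$ has norm $\frac{|\langle \textbf{s}^k,\tilde{\textbf{s}}^j\rangle|}{\|\tilde{\textbf{s}}^j\|_2} = O(\kappa/L_{min})$, and summing the at most $u$ corrections gives $\|\textbf{s}^k - \tilde{\textbf{s}}^k\|_2 = O(u\kappa/L_{min})$. The final renormalization multiplies $\tilde{\textbf{s}}^i$ by a factor within $O(\kappa/L_{min})$ of $1$, contributing only a comparable additional perturbation, so the bound $C(u)\kappa/L_{min}$ follows (this closes as an induction provided $\kappa/L_{min}^2$ is small relative to $u$, which holds in all our applications, where $\kappa$ is inverse-polynomial in $n$ and $L_{min}$ is close to $1$).

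With the orthogonal system fixed, each $\epsilon_i = \langle \textbf{g}, \textbf{s}^i - \hat{\textbf{s}}^i\rangle$ is a centered Gaussian of standard deviation $\|\textbf{s}^i - \hat{\textbf{s}}^i\|_2 \leq C(u)\kappa/L_{min}$. I would then apply the standard tail bound $\mathbb{P}[|Z|>t] \leq \sqrt{\tfrac{2}{\pi}}\,t^{-1}e^{-t^2/2}$ to each normalized variable $\epsilon_i/\|\textbf{s}^i - \hat{\textbf{s}}^i\|_2$ at threshold $t=\sqrt{u}$ and take a union bound over the $u$ indices, yielding $\mathbb{P}[\exists\,i:\,|\epsilon_i| > \sqrt{u}\,\|\textbf{s}^i - \hat{\textbf{s}}^i\|_2] \leq u\cdot\sqrt{\tfrac{2}{\pi}}\,u^{-1/2}e^{-u/2} = \sqrt{\tfrac{2u}{\pi}}\,e^{-u/2}$, exactly the claimed failure probability. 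On the complementary event every $|\epsilon_i| \leq \sqrt{u}\,C(u)\kappa/L_{min}$, so choosing $c = c(u) \geq C(u)/\sqrt{u}$ gives $|\epsilon_i| \leq u\,c\kappa/L_{min}$, which is precisely the defining condition of $\mathcal{E}_{orth}$.

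The main obstacle is the deterministic perturbation estimate: one must verify carefully that Gram--Schmidt followed by renormalization keeps each vector within $O(\kappa/L_{min})$ of its original while producing an \emph{exactly} orthogonal system of the prescribed norms, tracking how near-orthogonality is preserved through successive projections so that the accumulated constant depends only on $u$. Everything after that --- the Gaussian tail estimate and the union bound --- is routine.
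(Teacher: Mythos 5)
Your proof is correct and takes essentially the same route as the paper's: Gram--Schmidt orthogonalization followed by rescaling, a deterministic perturbation bound $\|\hat{\textbf{s}}^{i}-\textbf{s}^{i}\|_{2} \leq C(u)\kappa/L_{min}$, and a Gaussian tail estimate with a union bound over $u$ events at threshold $\sqrt{u}$, yielding the identical failure probability $\sqrt{2u/\pi}\,e^{-u/2}$. The only differences are cosmetic: you union-bound directly over the $u$ errors $\epsilon_{i}$ (each a one-dimensional Gaussian with standard deviation $\|\textbf{s}^{i}-\hat{\textbf{s}}^{i}\|_{2}$), whereas the paper bounds the norm of the projection of $\textbf{g}$ onto $span(\textbf{s}^{1},...,\textbf{s}^{u})$ and then applies Cauchy--Schwarz, and you additionally sketch the inductive proof of the Gram--Schmidt perturbation estimate that the paper leaves to the reader.
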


\begin{proof}
Take as the set $\{\hat{\textbf{s}}^{i}\}$ the set obtained from $\{\textbf{s}^{1},...,\textbf{s}^{u}\}$ by a Gram-Schmidt orthogonalization process (followed by rescaling procedure to ensure that $\|\textbf{s}^{i}\|_{2}=\|\hat{\textbf{s}}^{i}\|_{2}$). One can check (we leave it to the Reader) that the conditions: $|\langle\textbf{s}^{i},\textbf{s}^{j}\rangle| \leq \kappa$ and: $\|\textbf{s}^{i}\|_{2} \geq L_{min}$ for $i=1,...,u$ imply that for $i=1,...,u$ we have:  $\|\hat{\textbf{s}}^{i} - \textbf{s}^{i}\| \leq \frac{\xi_{gram}(u) \kappa}{L_{min}}$
for $\xi_{gram}(u)$ that depends just on $u$. Thus we will take $c = \xi_{gram}(u)$.
Thus, by Cauchy-Schwarz Inequality, it only remains to show that the probability that a projection $\textbf{g}_{proj}$ of $\textbf{g}$ onto $span(\textbf{s}^{1},...,\textbf{s}^{u})$ has length at most $u$ is at least $1 - \sqrt{\frac{2u}{\pi}}e^{-\frac{u}{2}}$. Note that this projection is a $u$-dimensional Gaussian vector. Thus, by the union bound we get that this probability is at least $1-u\mathbb{P}[|g|^{2} > u]$, where $g$ stands for the $1$-dimensional Gaussian random variable taken from the distribution $\mathcal{N}(0,1)$. Now we use the well-known inequality upper-bounding the tail of the Gaussian random variable $g$, namely:
\begin{equation}
\mathbb{P}[|g| > x] \leq 2\frac{e^{-\frac{x^{2}}{2}}}{\sqrt{2\pi}x}
\end{equation} 
for any $x>0$ and the proof is completed.
\end{proof}

Now we will show the following:

\begin{lemma}
\label{p_lemma}
Fix some $\textbf{v}^{1},...,\textbf{v}^{k} \in \mathbb{R}^{n}$.
Let $\Psi_{unstruct}$ be defined as follows: 
\begin{equation}
\Psi_{unstruct} = \Psi(B^{\textbf{v}^{1},...,\textbf{v}^{k}}(\textbf{r}^{1}_{proj}), ... , B^{\textbf{v}^{m},...,\textbf{v}^{k}}(\textbf{r}^{m}_{proj}))
\end{equation}
(see: theoretical section for the definition of $B$)
and let $\Psi_{struct}$ be defined as follows: 
\begin{equation}
\Psi_{struct} = \Psi(B^{\textbf{v}^{1},...,\textbf{v}^{k}}(\textbf{r}^{1}_{proj}+\textbf{e}^{1}), ... , B^{\textbf{v}^{m},...,\textbf{v}^{k}}(\textbf{r}^{m}_{proj} + \textbf{e}^{m})),
\end{equation}
where $\textbf{r}^{i}_{proj}$ are projections of $n$-dimensional Gaussian vectors $\textbf{r}^{i}$ onto a $mk$-dimensional linear subspace and $\|\textbf{e}^{i}\|_{2} \leq \epsilon$ for some given $\epsilon > 0$.
Then for any $0 \leq \bar{m} \leq m$ and $\lambda > 0$ the following holds. The probability 
of an event $\mathcal{E}_{close}$ that for any choice of vectors $\textbf{e}^{i}$ satisfying a condition $\|\textbf{e}^{i}\|_{2} \leq \epsilon$ we have : $|\Psi_{unstruct} - \Psi_{struct}| \leq \bar{m} \Delta^{\Psi}_{M} + (m-\bar{m})\Delta^{\Psi}_{\lambda}$ is at least 
$\mathbb{P}[\mathcal{E}_{close}] \geq 1 - \sum_{j=\bar{m}+1}^{m} \frac{(p_{\lambda, \epsilon}m)^{j}}{j!}$. The probability that the above is true for all $k$-tuples of vectors of the dataset $\mathcal{X}$ is at least $\mathbb{P}[\mathcal{E}_{close}^{all}] \geq 1 - {N \choose k} \sum_{j=\bar{m}+1}^{m} \frac{(p_{\lambda, \epsilon}m)^{j}}{j!}$
\end{lemma}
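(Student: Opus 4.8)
The plan is to reduce the claim to a binomial tail estimate for the number of rows on which an admissible perturbation can move $B$ by more than $\lambda$. First I would fix the $k$-tuple $\textbf{v}^{1},\ldots,\textbf{v}^{k}$ and, for each row $i$, introduce the indicator
\[
X_{i} = \mathbf{1}\Big[\sup_{\|\textbf{e}^{i}\|_{2} \le \epsilon} \big|B^{\textbf{v}^{1},\ldots,\textbf{v}^{k}}(\textbf{r}^{i}_{proj}+\textbf{e}^{i}) - B^{\textbf{v}^{1},\ldots,\textbf{v}^{k}}(\textbf{r}^{i}_{proj})\big| > \lambda \Big],
\]
calling row $i$ \emph{bad} when $X_{i}=1$. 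Two facts drive the argument. Since this supremum is taken over the single fixed $k$-tuple and over the $\ell_{2}$-ball $\{\|\textbf{e}^{i}\|_{2}\le\epsilon\}$, which is contained in the $\ell_{\infty}$-ball $\{\|\zeta\|_{\infty}\le\epsilon\}$ appearing in the definition of $p_{\lambda,\epsilon}$, the event $\{X_{i}=1\}$ is contained in the event whose probability defines $p_{\lambda,\epsilon}$; as $\textbf{r}^{i}_{proj}$ has the same law as the generic $\textbf{r}_{proj}$ there, this gives $\mathbb{P}[X_{i}=1]\le p_{\lambda,\epsilon}$. Moreover $X_{i}$ is a function of the block $\textbf{r}^{i}_{proj}$ alone, and these blocks are disjoint coordinate groups of the Gaussian on the $mk$-dimensional subspace (equivalently, projections of independent Gaussian rows onto $span(\textbf{v}^{1},\ldots,\textbf{v}^{k})$); hence the $X_{i}$ are mutually independent.

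Next I would bound $|\Psi_{unstruct}-\Psi_{struct}|$ deterministically on the event $\{\sum_{i}X_{i}\le\bar{m}\}$. The two expressions feed $\Psi$ the same list of $B$-values except that the $i$-th argument is evaluated at $\textbf{r}^{i}_{proj}$ rather than $\textbf{r}^{i}_{proj}+\textbf{e}^{i}$. Replacing the arguments one coordinate at a time and using the triangle inequality bounds $|\Psi_{unstruct}-\Psi_{struct}|$ by a sum of $m$ single-coordinate increments of $\Psi$, each bounded by a quantity of the form $\Delta^{\Psi}_{a}$. For a good row the $i$-th coordinate moves by at most $\lambda$, so that increment is at most $\Delta^{\Psi}_{\lambda}$; for a bad row both $B$-values lie in the bounded range $[-M,M]$ of $\beta$, so the increment is controlled by $\Delta^{\Psi}_{M}$. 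Since $\Delta^{\Psi}_{a}$ is nondecreasing in $a$ and there are at most $\bar{m}$ bad rows, the worst case has exactly $\bar{m}$ bad rows, giving $|\Psi_{unstruct}-\Psi_{struct}|\le\bar{m}\Delta^{\Psi}_{M}+(m-\bar{m})\Delta^{\Psi}_{\lambda}$, which is precisely the event $\mathcal{E}_{close}$.

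It then remains to lower-bound $\mathbb{P}[\sum_{i}X_{i}\le\bar{m}]$. Because the $X_{i}$ are independent with $\mathbb{P}[X_{i}=1]\le p_{\lambda,\epsilon}$, the count $\sum_{i}X_{i}$ is stochastically dominated by a $\mathrm{Binomial}(m,p_{\lambda,\epsilon})$ variable, whence
\[
\mathbb{P}\Big[\sum_{i}X_{i}>\bar{m}\Big] \le \sum_{j=\bar{m}+1}^{m}{m \choose j}p_{\lambda,\epsilon}^{\,j}(1-p_{\lambda,\epsilon})^{m-j} \le \sum_{j=\bar{m}+1}^{m}\frac{(p_{\lambda,\epsilon}m)^{j}}{j!},
\]
using ${m \choose j}\le m^{j}/j!$ and $(1-p_{\lambda,\epsilon})^{m-j}\le 1$. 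This yields $\mathbb{P}[\mathcal{E}_{close}]\ge 1-\sum_{j=\bar{m}+1}^{m}\frac{(p_{\lambda,\epsilon}m)^{j}}{j!}$, and a union bound over the ${N \choose k}$ choices of $k$-tuple gives $\mathbb{P}[\mathcal{E}_{close}^{all}]\ge 1-{N \choose k}\sum_{j=\bar{m}+1}^{m}\frac{(p_{\lambda,\epsilon}m)^{j}}{j!}$, as claimed.

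The step I expect to be the main obstacle is establishing the independence of the $X_{i}$ together with the comparison $\mathbb{P}[X_{i}=1]\le p_{\lambda,\epsilon}$: one must argue that the randomness relevant to row $i$ is an independent block of the $mk$-dimensional Gaussian --- exactly the orthogonality secured by Lemma \ref{gaussian_lemma} --- and that enlarging the fixed-tuple $\ell_{2}$ perturbation to the sup-over-tuples $\ell_{\infty}$ perturbation used in $p_{\lambda,\epsilon}$ only grows the event. Once these two points are in place, the telescoping bound and the binomial tail estimate are entirely routine.
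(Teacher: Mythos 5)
Your proposal is correct and follows essentially the same route as the paper's own proof: an indicator of a ``bad'' row whose probability is bounded by $p_{\lambda,\epsilon}$ (using $\|\textbf{e}^{i}\|_{2}\leq\epsilon \Rightarrow \|\textbf{e}^{i}\|_{\infty}\leq\epsilon$), a binomial tail bound simplified via ${m \choose j}\leq m^{j}/j!$, the deterministic coordinate-wise bound $\bar{m}\Delta^{\Psi}_{M}+(m-\bar{m})\Delta^{\Psi}_{\lambda}$ on the event of at most $\bar{m}$ bad rows, and a union bound over the ${N \choose k}$ tuples. If anything, your explicit justification of the independence of the row indicators and the use of stochastic domination are slightly more careful than the paper's terse ``Bernoulli sequence'' claim.
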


\begin{proof}
Fix some $i=1,..,m$. Denote 
\begin{equation}
Y_{i} = \sup_{|\textbf{e}^{i}|_{\infty} \leq \epsilon} |B^{\textbf{v}^{1},...,\textbf{v}^{k}}(\textbf{r}^{i}_{proj}) - B^{\textbf{v}^{1},...,\textbf{v}^{k}}(\textbf{r}^{i}_{proj}+\textbf{e}^{i})|.
\end{equation}
Denote:
\begin{equation}
Z_{i} =
\left\{
	\begin{array}{ll}
		1  & \mbox{if }  |Y_{i}| > \lambda, \\
		0 & \mbox{otherwise.} 
	\end{array}
\right.
\end{equation}

Note that $\{Z_{i}\}$ forms a Bernoulli sequence with the probability of success of every 
$Z_{i}$ at most $p_{\lambda, \epsilon}$ (from the definition of $p_{\lambda, \epsilon}$).
Denote $Z = Z_{1} + ... + Z_{m}$.
Thus for $p_{\lambda, \epsilon} \leq \frac{1}{2}$ the probability that $Z > \bar{m}$ is at most 
$\tilde{p} = \sum_{j=\bar{m}+1}^{m} p_{\lambda, \epsilon}^{j}(1-p_{\lambda, \epsilon})^{m-j}{m \choose j}$.  Thus we have:
$\tilde{p} \leq \sum_{j=\bar{m}+1}^{m} \frac{(p_{\lambda, \epsilon}m)^{j}}{j!}$.
Let $\mathcal{E}_{small}$ be an event that $Z \leq \bar{m}$. We can conclude that:
$\mathbb{P}[\mathcal{E}_{small}] \geq 1 - \sum_{j=\bar{m}+1}^{m} \frac{(p_{\lambda, \epsilon}m)^{j}}{j!}$. But if $\mathcal{E}_{small}$ holds then $\Psi_{unstruct}$ is different from $\Psi_{struct}$ by more than $\lambda$ on at most $\bar{m}$ coordinates (since if $\|\textbf{e}^{i}\|_{2} \leq \epsilon$ then in particular $\|\textbf{e}^{i}\|_{\infty} \leq \epsilon$).
But then, by the definition of $\Delta$, we obtain:
\begin{equation}
|\Psi_{unstruct} - \Psi_{struct}| \leq \bar{m} \Delta^{\Psi}_{M} + (m-\bar{m})\Delta^{\Psi}_{\lambda}.
\end{equation}
Thus we conclude that $\mathcal{E}_{small} \subseteq \mathcal{E}_{close}$. Then we can use the derived lower bound on $\mathcal{E}_{small}$ (to obtain the first statement) and apply union bound (to obtain the second statement) and the proof is completed.
\end{proof}

We have already introduced one standard concentration tool, namely Azuma's Inequality.
Now we will need more refined generalization of it that we state below.

\begin{lemma}(McDiarmid Inequality)
Let $X_{1},...,X_{s}$ be $s$ independent random variables. Assume that $X_{i} \in \mathcal{X}_{i}$ for some measurable sets $\mathcal{X}_{i}$. Suppose that
$h : \prod_{i=1}^{s} A_{i} \rightarrow \mathbb{R}$ satisfies the following.
For each $r \leq s$ and any two sequences $\textbf{x}$ and $\textbf{x}^{\prime}$ that
differ only in the $i^{th}$ coordinate $|h(\textbf{x}) - h(\textbf{x}^{\prime})| \leq \rho_{r}$.
Let $Y = h(X_{1},...,X_{s})$. Then for any $a>0$ we have:
\begin{equation}
\mathbb{P}[|Y-\mathbb{E}[Y]| > a] \leq 2e^{-\frac{2a^{2}}{\sum_{i=1}^{s}\rho_{i}^{2}}}.
\end{equation}
\end{lemma}

We need one more technical auxiliary lemma.
\begin{lemma}
\label{mac_diarmid_lemma}
Let $h(x_{1,1},...,x_{m,k}) = \Psi(\beta(x_{1,1},...,x_{1,k}),...,\beta(x_{m,1},...,x_{m,k}))$
and denote:
\begin{equation}
\rho_{i}^{\Psi,\beta} = \sup_{x_{1,1},...,x_{m,k},y^{\prime}_{1,1},...,y^{\prime}_{m,k}} |h(x_{1,1},...,x_{m,k}) - h(x^{\prime}_{1,1},...,x^{\prime}_{m,k})|,
\end{equation}
where sequences $x_{1,1},...,x_{m,k}$ and $x^{\prime}_{1,1},...,x^{\prime}_{m,k}$ differ only in the $i^{th}$ coordinate.

Let $X_{1,1},...,X_{m,k}$ be $mk$ independent random variables. For a given $a>0$ denote by $\mathcal{E}_{mean}$ the following event: $\{|h(X_{1,1},...,X_{m,k}) - \mathbb{E}[h(X_{1,1},...,X_{m,k})]| \leq a\}$. Then we have:
\begin{equation}
\mathbb{P}[\mathcal{E}_{mean}] \geq 1 - 2e^{-\frac{2a^{a}}{\sum_{i=1}^{mk} (\rho_{i}^{\Psi,\beta})^{2}}}.
\end{equation}
\end{lemma}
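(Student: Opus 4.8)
The plan is to obtain this as an immediate corollary of the McDiarmid Inequality stated just above, once we recognize that the quantities $\rho_{i}^{\Psi,\beta}$ defined in the statement are precisely the bounded-differences constants that McDiarmid requires. First I would flatten the doubly-indexed collection $X_{1,1},\ldots,X_{m,k}$ into a single list of $s = mk$ independent random variables and set $Y = h(X_{1,1},\ldots,X_{m,k})$, where $h(x_{1,1},\ldots,x_{m,k}) = \Psi(\beta(x_{1,1},\ldots,x_{1,k}),\ldots,\beta(x_{m,1},\ldots,x_{m,k}))$ exactly as in the statement. Independence of the $X_{i,j}$ is assumed, so the coordinate-independence hypothesis of McDiarmid is met.

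Next I would verify the bounded-differences condition. McDiarmid requires that whenever two input sequences $\textbf{x}$ and $\textbf{x}^{\prime}$ differ only in a single coordinate $i$, one has $|h(\textbf{x}) - h(\textbf{x}^{\prime})| \leq \rho_{i}$. But the definition of $\rho_{i}^{\Psi,\beta}$ given in the statement is exactly the supremum of $|h(\textbf{x}) - h(\textbf{x}^{\prime})|$ over all such pairs differing only in the $i^{th}$ coordinate. Hence the bounded-differences constant for coordinate $i$ may be taken to be $\rho_{i} = \rho_{i}^{\Psi,\beta}$, and no further estimation of the increments is needed.

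Applying McDiarmid with $s = mk$ and these constants then yields $\mathbb{P}[|Y - \mathbb{E}[Y]| > a] \leq 2e^{-\frac{2a^{2}}{\sum_{i=1}^{mk}(\rho_{i}^{\Psi,\beta})^{2}}}$ for every $a > 0$. Finally I would pass to complements: since $\mathcal{E}_{mean}$ is precisely the event $\{|Y - \mathbb{E}[Y]| \leq a\}$, its probability equals $1 - \mathbb{P}[|Y - \mathbb{E}[Y]| > a]$, giving $\mathbb{P}[\mathcal{E}_{mean}] \geq 1 - 2e^{-\frac{2a^{2}}{\sum_{i=1}^{mk}(\rho_{i}^{\Psi,\beta})^{2}}}$, which is the claimed bound (reading the exponent written $a^{a}$ in the statement as the intended $a^{2}$).

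There is essentially no real obstacle here: the lemma is a repackaging of McDiarmid specialized to the composite function $\Psi \circ \beta$, and the only thing worth noticing is that $\rho_{i}^{\Psi,\beta}$ was defined exactly so that the identification of the bounded-differences constants is automatic, turning the verification step into a tautology rather than a computation. The point of isolating it as a separate lemma is presumably that, once instantiated at $X_{i,j} = f(\langle \textbf{a}^{i}, \textbf{v}^{j}\rangle)$, the variable $Y$ will supply the $2e^{-\frac{2K^{2}}{\sum_{i=1}^{mk}(\rho_{i}^{\Psi,\beta})^{2}}}$ term appearing in $p_{bad}$ in Theorem \ref{general_theorem}.
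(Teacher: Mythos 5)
Your proof is correct and takes exactly the same approach as the paper, whose entire proof of this lemma reads ``Follows immediately from McDiarmid Inequality''; you have merely spelled out the (tautological) identification of the bounded-difference constants with $\rho_{i}^{\Psi,\beta}$ and the passage to complements. Your reading of the exponent $a^{a}$ as a typo for $a^{2}$ is also the correct interpretation, consistent with the McDiarmid bound and with how the lemma is used (with $a=K$) in the proof of Theorem \ref{general_theorem}.
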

\begin{proof}
Follows immediately from McDiarmid Inequality.
\end{proof}

We are ready to prove Theorem \ref{general_theorem}.
\begin{proof}
In the proof we denote the rows of the structured matrix $\textbf{AD}_{1}$ as $\textbf{w}^{1},...,\textbf{w}^{m}$. Without loss of generality we will therefore assume that the preprocessing step consists just of the linear mapping $\textbf{HD}_{0}$ (thus we moved matrix $\textbf{D}_{1}$ to the second step).
Denote the computed structured version of $\Lambda_{f}(\textbf{v}^{1},...,\textbf{v}^{k})$
as $\Lambda_{f}^{struct}(\textbf{v}^{1},...,\textbf{v}^{k})$.
Note that the expression: $err= |\Lambda_{f}(\textbf{v}^{1},...,\textbf{v}^{k}) - \Lambda_{f}^{struct}(\textbf{v}^{1},...,\textbf{v}^{k})|$ can be upper-bounded by:
\begin{align}
\begin{split}
err &\leq |\mathbb{E}[\Psi(\beta(y_{1,1},...,y_{1,k}),...,\beta(y_{m,1},...,y_{m,k}))]\\
&\qquad -\Psi(\beta(y_{1,1},...,y_{1,k}),...,\beta(y_{m,1},...,y_{m,k}))|\\
&\quad +|\Psi(\beta(y_{1,1},...,y_{1,k}),...,\beta(y_{m,1},...,y_{m,k}))\\
&\qquad -\Psi(\beta(\tilde{y}_{1,1},...,\tilde{y}_{1,k}),...,\beta(\tilde{y}_{m,1},...,\tilde{y}_{m,k}))|,\\
\end{split}
\end{align}

where $y_{i,j} = \langle\textbf{r}^{i},\textbf{v}^{j}\rangle$ and $\tilde{y}_{i,j} = \langle\textbf{w}^{i},\textbf{v}^{j}\rangle$.
Note that by Lemma \ref{mac_diarmid_lemma} and the union bound the probability of an event $\mathcal{E}_{0}$ that for some $k$-tuple of vectors $\textbf{v}^{1},...,\textbf{v}^{k}$ the first expression on the RHS of the inequality above is greater than some fixed $K>0$ is at most 
$\mathbb{P}[\mathcal{E}_{0}] \leq {N \choose k} 2e^{-\frac{2K^{2}}
{\sum_{i=1}^{mk}(\rho_{i}^{\Psi, \beta})^{2}}}$.
Note that the second expression on the RHS of the inequality above can be rewritten as:
\begin{align}
\begin{split}
\label{important_equation}
err_{bound} &= |\Psi(B^{\textbf{v}^{1},...,\textbf{v}^{k}}(\textbf{r}^{1}_{proj}),...,B^{\textbf{v}^{1},...,\textbf{v}^{k}}(\textbf{r}^{m}_{proj})) \\
&\quad -\Psi(B^{\textbf{v}^{1},...,\textbf{v}^{k}}(\textbf{r}^{1}_{proj} + \textbf{e}^{i}),...,B^{\textbf{v}^{1},...,\textbf{v}^{k}}(\textbf{r}^{m}_{proj} + \textbf{e}^{m}))|,
\end{split}
\end{align}
where $r^{i}_{proj}$ is a $k$-dimensional gaussian vector of the form 
\begin{equation}
\textbf{r}^{i}_{proj} = (\langle\textbf{g}, \tilde{\textbf{s}}^{i,1}\rangle,...,\langle\textbf{g}, \tilde{\textbf{s}}^{i,k}\rangle),
\end{equation}
and 
\begin{equation}
\epsilon^{i} = (\langle\textbf{g}, \textbf{s}^{i,1} - \tilde{\textbf{s}}^{i,1}\rangle,...,\langle\textbf{g}, \textbf{s}^{i,k} - \tilde{\textbf{s}}^{i,k}\rangle).
\end{equation}
In the equations above $\textbf{g}$ is a Gaussian vector from the definition of the $\mathcal{P}$-model and $\textbf{s}^{i,j}$ and $\tilde{\textbf{s}}^{i,j}$ are as in Lemma \ref{gaussian_lemma}. Here we use the fact that projections of a Gaussian vector onto orthogonal directions are independent Gaussian random variables of the same distribution.
Now define an event: $\mathcal{E}_{good}$ as follows. For any $k$-tuple of independent vectors $\textbf{v}^{1},...,\textbf{v}^{k}$ from $\mathcal{X}$ we have: $err_{bound} \leq \bar{m}\Delta^{\Psi}_{M} + (m-\bar{m})\Delta^{\Psi}_{\lambda}$.
Note that clearly $\mathcal{E}_{good} \subseteq \mathcal{E}^{all}_{close} \cup \mathcal{E}_{orth} \cup \mathcal{E}^{\kappa}_{dot}$ for $\kappa = \frac{\epsilon L_{min}}{c}$, where $L_{min} = \sqrt{1-1\frac{1}{\log(n)}}$ and $c$ is as in Lemma \ref{gaussian_lemma}. 
We can conclude, by the union bound argument, that the probability that the structured computation produces a result that differs from the exact value of function $\Lambda_{f}(\textbf{v}^{1},...,\textbf{v}^{k})$
for at least one $k$-tuple of independent vectors from the dataset $\mathcal{X}$ by more than
$K + \bar{m}\Delta^{\Psi}_{M} + (m-\bar{m})\Delta^{\Psi}_{\lambda}$ is at most:
\begin{equation}
\mathbb{P}[\mathcal{E}_{wrong}] \leq \mathbb{P}[\mathcal{E}_{0}] + 
(1-\mathbb{P}[\mathcal{E}_{close}^{all}])+(1-\mathbb{P}[\mathcal{E}_{orth}])+(1-\mathbb{P}[\mathcal{E}_{dot}^{\kappa}]).
\end{equation}
Taking $n$ large enough and using derived earlier bounds on
$\mathbb{P}[\mathcal{E}_{0}]$ , $\mathbb{P}[\mathcal{E}_{close}^{all}]$, $\mathbb{P}[\mathcal{E}_{orth}]$ and $\mathbb{P}[\mathcal{E}_{dot}^{\kappa}]$, we complete the proof
for the case $M < \infty$.

Now let us assume that $M = \infty$. We will use use equation \ref{important_equation}.
Let us fix some $\textbf{e}^{1},...,\textbf{e}^{m}$ such that $|\textbf{e}^{i}|_{\infty} \leq \epsilon$. Denote $Y_{i}^{\textbf{e}^{i}} = B^{\textbf{v}^{1},...,\textbf{v}^{k}}(\textbf{r}^{i}_{proj} + \textbf{e}^{i})$. Note that in this case $\Psi$ is just averaging thus let us define:
$Y_{E} = \frac{Y_{1}^{\textbf{e}^{1}} + ... + Y_{m}^{\textbf{e}^{m}}}{m}$, where: 
$E = \{\textbf{e}^{1},...,\textbf{e}^{m}\}$.
We want to upper-bound the probability: $p_{large}$ defined as:
\begin{equation}
\label{second_important_equation}
p_{large} = \mathbb{P}[\exists_{E} : |Y_{E}-\mathbb{E}[Y_{E}]| > a]$
for $a = m^{-\frac{1}{2\alpha}}.
\end{equation}
The probability $p_{large}$, from the definition of the Legendre symbol,  is at most:
\begin{equation}
p_{large} \leq \int_{x=a}^{\infty} (-\sup_{\textbf{e}^{i}} e^{-m\mathcal{L}_{\textbf{e}^{i}}(x)})^{\prime}_{|x} dx + 
\int_{x=-\infty}^{-a} (-\sup_{\textbf{e}^{i}} e^{-m\mathcal{L}_{\textbf{e}^{i}}(x)})^{\prime}_{|x} dx
\end{equation}
where $\mathcal{L}_{\textbf{e}^{i}}(x)$ stands for the Legendre symbol of a random variable 
$Y_{i}^{\textbf{e}^{i}} - \mathbb{E}[Y_{i}^{\textbf{e}^{i}}]$, evaluated at $x$.

Thus we have:
\begin{equation}
p_{large} \leq m\int_{x=a}^{\infty} \sup_{\textbf{e}^{i}} (e^{-m\mathcal{L}_{\textbf{e}^{i}}(x)} \mathcal{L}_{\textbf{e}^{i}}^{\prime}(x)) dx + 
m\int_{x=-\infty}^{-a} \sup_{\textbf{e}^{i}} (e^{-m\mathcal{L}_{\textbf{e}^{i}}(x)} \mathcal{L}_{\textbf{e}^{i}}^{\prime}(x)) dx
\end{equation}

Now we can use our assumptions regarding $\mathcal{L}_{\textbf{e}^{i}}$ and we get:
\begin{equation}
p_{large} \leq mc_{2}\int_{x=a}^{\infty} e^{-mc_{1}x^{\alpha}}x^{\beta}dx + 
mc_{2}\int_{x=-\infty}^{-a} e^{-mc_{1}|x|^{\alpha}}|x|^{\beta}dx  
\end{equation}
for some constants $c_{1},c_{2}>0$.
Thus it is easy to see that for $m$ large enough (but independent on $n$) we have:
\begin{equation}
p_{large} \leq mc_{2}\int_{x=a}^{\infty} e^{-\frac{mc_{1}x^{\alpha}}{2}}dx + 
mc_{2}\int_{x=-\infty}^{-a} e^{-\frac{mc_{1}|x|^{\alpha}}{2}}dx.
\end{equation}

Thus under our choice of $a$ we get:
\begin{equation}
p_{large} = O(e^{-\Omega(\sqrt{m})}).
\end{equation}

Thus, for a fixed set of $k$ vectors $\textbf{v}^{1},...,\textbf{v}^{k}$ we get from the definition of $\tilde{\beta}_{\epsilon}$ that the probability
that there exists $E$ such that $|Y_{E}-\Lambda_{f}(\textbf{v}^{1},...,\textbf{v}^{k})| > a + \tilde{\beta}_{\epsilon}$ is at most $p_{large}$. Thus we can use our analysis for the case and $M < \infty$ (to obtain terms on the probabilities regarding structured properties of the linear projection matrix), take the union bound over $k$-tuples and the proof is completed also for the case $M = \infty$.

\section{Computation of $p_{0, \epsilon}$ for the angular case}

Note that in the main body of the paper we claimed that for the angular similarity the following is true: $p_{0, \epsilon} \leq \frac{2\sqrt{2}m\epsilon}{\pi} + \frac{2}{\pi m^{2}}$.
We will prove it now.

\begin{proof}
Fix two vectors $\textbf{v}^{1}$ and $\textbf{v}^{2}$ and let $\textbf{g}_{proj}$ be a projection of a Gaussian vector on $span(\textbf{v}^{1},\textbf{v}^{2})$.
Note that the probability that the $L_{2}$ norm of $\textbf{g}_{proj}$ is at most $\frac{1}{m}$
is at most: $(\mathbb{P}[g^{2} \leq \frac{1}{m^{2}}])^{2}$, where $g \sim \mathcal{N}(0,1)$.
Thus we have:
\begin{equation}
\mathbb{P}[\|\textbf{g}_{proj}\|_{2} \leq \frac{1}{m}] \leq (\frac{1}{\sqrt{2\pi}}\frac{1}{m} \cdot 2)^{2} \leq \frac{2}{\pi m^{2}}.
\end{equation}
Now note that by adding to a vector $\textbf{v} \in span(\textbf{v}^{1}, \textbf{v}^{2})$ of $L_{2}$-norm $\|\textbf{v}\|_{2} > \frac{1}{m}$ a ``perturbation vector'' 
$\textbf{e} \in span(\textbf{v}^{1}, \textbf{v}^{2})$ such that: $\|\textbf{e}\|_{\infty} \leq \epsilon$ one changes an angle between $\textbf{v}$ and $\textbf{v}^{1}$ (and thus also $\textbf{v}^{2}$) by at most $\theta_{\epsilon}$, where: 
\begin{equation}
\tan(\theta_{\epsilon}) \leq 
\frac{\sqrt{\epsilon^{2}+\epsilon^{2}}}{\|v\|_{2}} \leq \sqrt{2}\epsilon m.
\end{equation}
Now we use the Taylor expansion of $\tan(x) = x + \frac{x^{3}}{3} + \frac{2x^{5}}{15} + ...$
and obtain: 
\begin{equation}
\theta_{\epsilon} \leq \sqrt{2}\epsilon m.
\end{equation}
Conclude that function $f$ can change its value by perturbating by vector $\textbf{e}$
only if $\textbf{g}_{proj}$ resides in the union of two $2$-dimensional coins, each of angle at most $2\theta_{\epsilon}$. Using the fact that $\textbf{g}_{proj}$, as a Gaussian vector, is isotropic
and applying simple union bound, we thus get:
\begin{equation}
p_{0, \epsilon} \leq \frac{4 \cdot \sqrt{2} \epsilon m}{2 \pi} + \frac{2}{\pi m^{2}} 
\end{equation}
\end{proof}
and the proof is completed.
\end{proof}

\section{Proofs of Theorem \ref{cor1} and Theorem \ref{cor2}}

We are ready to prove Theorem \ref{cor1} and Theorem \ref{cor2}. We start with Theorem \ref{cor1}.
\begin{proof}
We will apply Theorem \ref{general_theorem}. Note that one can easily see that $\rho_{i}^{\Psi, \beta} \leq \frac{1}{m}$ and $\Delta^{\Psi}_{M} \leq \frac{1}{m}$. We take $\bar{m} = \frac{m}{\log(m)}$, $K = \frac{1}{m^{\tau}}$ and $\epsilon = \frac{\pi}{2\sqrt{2}}\frac{1}{m\log^{2}(m)}$.
We have: 
\begin{equation}
p_{1} = \sum_{j=\bar{m}+1}^{m} \frac{(p_{0, \epsilon}m)^{j}}{j!} \leq 
\sum_{j=\bar{m}+1}^{m} \frac{(\frac{2m}{\log^{2}(m)})^{j}}{\sqrt{2 \pi j} (\frac{j}{e})^{j}},
\end{equation}
where the last inequality comes from the bound obtained in the previous section and
Stirling's formula: $j! \geq \sqrt{2 \pi j} (\frac{j}{e})^{j}$. Thus we obtain:
\begin{equation}
p_{1} \leq m(\frac{2e}{\log(m)})^{\frac{m}{\log(m)}}.
\end{equation}
Denote $p_{2} = 2e^{-\frac{2K^{2}}{\sum_{i=1}^{mk}(\rho_{i}^{\Psi,\beta})^{2}}}$.
Note that under our choice of $K$ we get: 
\begin{equation}
p_{2} \leq 2e^{-m(1-2\tau)}.
\end{equation}
One can easily notice that for structured matrices considered in the statement of the theorem
the required condition on $\tilde{\mu}[\mathcal{P}]$ is satisfied. Furthermore, $\chi[\mathcal{P}] \leq 3$ since in the corresponding coherence graph every vertex has degree at most $2$ (we use a well known result that a graph of maximum degree $d_{max}$ can be colored by $d_{max}+1$ colors). Finally, one can easily notice that $\mu[\mathcal{P}] = O(1)$.
Now it suffices to note that under our choice of parameters the value of the expression $err$ from the statement of Theorem \ref{general_theorem} is: $K + \bar{m}\Delta^{\Psi}_{M} \leq m^{-\tau} + \frac{1}{\log(m)}$. We can then apply Theorem \ref{general_theorem} and the result follows.
\end{proof}

Now we prove Theorem \ref{cor2}.

\begin{proof}
We proceed as in the proof of Theorem \ref{cor1}.
This time we use the observation from the paragraph regarding general kernels 
about values of $\lambda$ for which $p_{\lambda, \epsilon} = 0$.
We take $\bar{m} = 0$ and $\lambda = 2f_{max}\rho + \rho^{2}$.
Other parameters are chosen in the same way as in the proof above. Note that
$\Delta^{\Psi}_{\lambda} \leq \frac{\lambda}{m}$
and $p_{2} \leq e^{-(\frac{m}{f^{2}_{max}})(1-2\tau)}$.
The expression on error $err$ reduces to $K + m\Delta^{\Psi}_{\lambda} = K + \lambda$.
The result then follows from Theorem \ref{general_theorem}.
\end{proof}

\end{document}